\newtheorem{df}{\bf Definition}[section]
\newtheorem{as}{\bf Assumption}[section]
\newtheorem{theorem}{\bf Theorem}[section]
\newenvironment{proof}{\quad{\noindent\it Proof Sketch}\quad}{\hfill $\square$\par}
\newcommand{\algmargin}{\the\ALG@thistlm}
\newlength{\whilewidth}
\algnewcommand{\parState}[1]{\State%
	\parbox[t]{\dimexpr\linewidth-\algmargin}{\strut #1\strut}}
\def\BibTeX{{\rm B\kern-.05em{\sc i\kern-.025em b}\kern-.08em
		T\kern-.1667em\lower.7ex\hbox{E}\kern-.125emX}}
\begin{document}
	
	\title{Learning Scalable Multi-Agent Coordination by Spatial Differentiation for Traffic Signal Control \\
		\thanks{}
	}
	
	\author{\IEEEauthorblockN{Junjia Liu$ ^\dagger $, Huimin Zhang$ ^\ddagger $, Zhuang Fu$ ^\dagger $*\thanks{The first two authors Junjia Liu and Huimin Zhang contributed equally to this paper.
				* Correspondence: zhfu@sjtu.edu.cn (Z. F.),
				Tel.: +86-138 1649 6926 (Z. F.)}, Yao Wang$ ^\dagger $}
		\IEEEauthorblockA{\textit{$ ^\dagger $State Key Laboratory of Mechanical System and Vibration}, \textit{Shanghai Jiao Tong University}\\ \textit{$ ^\ddagger $National Engineering Laboratory for Automotive Electronic Control Technology}, \textit{Shanghai Jiao Tong University} \\
			Shanghai, 200240, China \\
			junjialiu@sjtu.edu.cn, 
			zhanghm1819@sjtu.edu.cn, 
			zhfu@sjtu.edu.cn, 
			sjtuyao@sjtu.edu.cn}
		
	}
	
	\maketitle
	
	\begin{abstract}
		The intelligent control of the traffic signal is critical to the optimization of transportation systems. To achieve global optimal traffic efficiency in large-scale road networks, recent works have focused on coordination among intersections, which have shown promising results. However, existing studies paid more attention to observations sharing among intersections (both explicit and implicit) and did not care about the consequences after decisions. In this paper, we design a multi-agent coordination framework based on Deep Reinforcement Learning method for traffic signal control, defined as \textit{$ \gamma $-Reward} that includes both original \textit{$ \gamma $-Reward} and \textit{$ \gamma $-Attention-Reward}. Specifically, we propose the \textit{Spatial Differentiation} method for coordination which uses the temporal-spatial information in the replay buffer to amend the reward of each action. A concise theoretical analysis that proves the proposed model can converge to Nash equilibrium is given. By extending the idea of Markov Chain to the dimension of space-time, this truly decentralized coordination mechanism replaces the graph attention method and realizes the decoupling of the road network, which is more scalable and more in line with practice. The simulation results show that the proposed model remains a state-of-the-art performance even not use a centralized setting. Code is available in \textit{https://github.com/Skylark0924/Gamma\_Reward}.
		
	\end{abstract}
	
	\begin{IEEEkeywords}
		Multi-agent, coordination mechanism, $ \gamma $-Reward, Deep Reinforcement Learning, spatial differentiation
	\end{IEEEkeywords}
	
	\section{Introduction}
	Traffic congestion has been an increasingly critical matter for decades. It not only leads to an increase in commuting time but also exacerbates noise and environmental pollution issues due to frequent acceleration and deceleration. According to relevant researches, almost all collisions and delays in urban traffic are concentrated on intersections\cite{to2001white}. Unreasonable signal control significantly leads to a waste of traffic resources. Therefore, the key to solve urban congestion is to keep the intersection clear. 
	
	Deep Reinforcement Learning (DRL) methods have been well applied in the traffic signal regulation of single-intersection and shown a better performance than traditional methods\cite{Liang}\cite{wei2018intellilight}\cite{DBLP:journals/corr/abs-1904-08117}, such as Max-pressure\cite{Varaiya2013The}. Recent works began to try to apply DRL algorithms, especially multi-agent Reinforcement Learning (MARL)\cite{bucsoniu2010multi}, to multi-intersection and even large-scale road networks. Different from the single-intersection problem, the intelligent regulation of large-scale road networks needs to achieve synergistic control between various intersections which can be regarded as a  Multi-objective Optimization Problem (MOP) and Markov/Stochastic Games (MGs) with Cooperative Setting\cite{zhang2019multi}. In other words, multiple agents need to coordinate with each other. They need to keep their intersection open, and at the same time, pay attention to the traffic flow status of surrounding or even remote intersections, so that they can ultimately improve the efficiency of the overall road network. The latest research introduced the graph attention network (GAT) to share the observations of real-time traffic volume implicitly with each other, and get an inspired result\cite{Wei2019}. 
	
	\subsection{Related work and Motivation}
	The existing traffic signal control (TSC) methods can be divided into two categories: rule-based methods and learning-based methods. The former transforms the problem into a rule-based optimization problem; the later one seeks control strategy from the traffic flow data.
	
	For the rule-based methods, such as Webster\cite{koonce2008traffic}, GreenWave\cite{T1996The} and Max-pressure\cite{Varaiya2013The}, a traffic signal optimization problem is usually solved under some assumptions like a preset period or fixed cycle-based phase sequence\cite{DBLP:journals/corr/abs-1904-08117}. Webster is used for an isolated intersection and is a widely-used method in TSC. It assumes that the traffic flow is uniform during a certain period and constructs a closed-form equation to generate the optimal cycle length and phase split for a single intersection that minimizes the travel time of all vehicles passing the intersection.
	GreenWave is a classical method in the transportation field to implement coordination,
	which aims to optimize the offsets to reduce the number of stops for vehicles travelling along one specific direction.
	Max-pressure aims to reduce the risk of over-saturation by balancing queue length between adjacent intersections and minimizing the “pressure” of the phases for an intersection. However, the unpractical assumptions in these methods might not lead to excellent performance. 

	Recently, the DRL technique, as a popular learning-based method, has been proposed to control traffic signals due to its capability of online optimization without prior knowledge about the given environment. At present, DRL has been successfully applied to the single-intersection regulation of traffic signal  \cite{wei2018intellilight} by regarding the intersection as an agent. The results of various state-of-the-art DRL algorithms are compared in \cite{mousavi2017traffic}, showing that Deep Q-Networks algorithm is more suitable for the solution of TSC tasks. However, the problem with multiple intersections is still a frontier.
	
	Existing MARL algorithms focus on collaboration among agents and can be divided into centralized and decentralized setting according to various information structures\cite{zhang2019multi}.
	Independent reinforcement learning (IRL), as a fully decentralized setting, directly perform a DRL algorithm for each agent with neither explicit nor implicit information exchange with each other. This method has been applied in multi-intersection TSC problem\cite{xiong2019learning}\cite{zheng2019learning}. However, the environment is shared in MARL, and it changes with the policy and state of each agent\cite{foerster2016learning}. For one of the agents, the environment is dynamic and non-stationary, leads to convergence problems.
	Tan \textit{et.al.}\cite{Tan} compares IRL with Value Decomposition Networks (VDN) and illustrates the disadvantages of IRL. 
	As a centralized method, there exists a central controller in VDN which integrates the value function of each agent to obtain a joint action-value function. The  integration strategy is to add them directly. Moreover, QMIX\cite{rashid2018qmix}, as a extend of VDN, uses state information and integrates them in a nonlinear way and gets a stronger approximation ability than VDN. Both QMIX and VDN are typical centralized MARL algorithms with communication, and this joint-action idea has been already used in TSC\cite{van2016coordinated}.

	Based on these centralized methods, recent TSC studies condense the global scope into a smaller neighborhood\cite{Wei2019}\cite{nishi2018traffic}\cite{wei2019presslight} and use graph convolution network (GCN) to achieve coordination. Colight\cite{Wei2019} introduces the concept of attention mechanism and realized cooperation by integrating observations in a neighborhood implicitly into a hidden state. However, as mentioned in Colight, the neighborhood scope is a constant, so the traffic information among intersections cannot be utilized to determine the range of the neighborhood. Due to the usage of GCN and Multi-head Attention techniques, these methods still need to gather information for centralized computing and betray the intention of distribution. 
	
	In fact, such a central controller does not exist in many applications, apart from those that can easily have a global perspective, like video games\cite{zhang2019multi}. Considering the demand for the scalability in TSC, setting a central controller is impractical. Therefore, we need to seek a compromise solution which is both convergent and scalable. This setting is referred to as \textit{a decentralized one with networked agents}\cite{zhang2019multi}.
	
	\subsection{Main contributions}
	In this paper, we first regard each intersection as a DRL agent and transform the TSC problem into a Markov Decision Process (MDP). Unlike existing work, this paper is aiming at improving method scalability while ensuring a SOTA performance. To achieve this goal, we introduce a structural prior about road networks as an inductive bias and extend the Markov Chain theory to the temporal-spatial domain for the coordination. 
		
	The change of future states and rewards from distant intersections is multiplied by the \textit{spatial discount rate} $\gamma$ (Note that it represents the temporal-spatial discount on multi-agent information, not the ordinary meaning used in temporal difference, and it will be distinguished in detail later) and taken into account when learning the current intersection policy. This information is used as a penalty to correct the calculation of current rewards so that the agents have the ability to collaborate. Due to the various traffic volume of each road, the influence of surrounding intersections may be different. Therefore, the attention mechanism is introduced in this paper to correct the influence weight of surroundings on the current intersection. 
	
	To summarize, our main contributions are as follows:
	\begin{itemize}
		\item We propose a coordination framework, defined as \textit{$\gamma $-Reward}, which can communicate with adjacent intersections and even further in a scalable way by sharing future states and rewards and achieve global optimal control of the TSC problem.
		\item Instead of Multi-head Attention, the \textit{spatial differentiation}  method is proposed to collect the temporal-spatial information in a decentralized way and amend the current reward by recursion.
		\item We just use attention in the neighborhood for distinguishing various significance, and update the attention score parameters in \textit{spatial differentiation} formula by imitating the idea of the target network.
		\item It is found in the test results of various road networks that the \textit{$\gamma $-Reward} series, including original \textit{$ \gamma $-Reward} and \textit{$ \gamma $-Attention-Reward}, maintain a SOTA performance while achieving a better scalability.
	\end{itemize}
	
	\section{Problem Formulation $ \& $ Proposed MARL method}
	In this section, we introduce the basic knowledge of the TSC problem and propose our MARL method.
	\subsection{Preliminary $ \& $ Formulation}
	\begin{itemize}
		\item \textbf{Lane:} Lane is part of a roadway that is designated to be used by a single line of vehicles \cite{wiki:xxx}. There are two kinds of lanes: entering lane and exiting lane\cite{stevanovic2010adaptive}. Each intersection consists of multiple lanes.
		\item \textbf{Phase:} A phase is a combination of movement signals\cite{DBLP:journals/corr/abs-1904-08117}. Figure \ref{Phase}(a) shows eight main directions of vehicles at the intersection. Note that the direction of turning right is usually ignored in these problems since it can execute every time without caring for the traffic signal. The directions in the same phase need not be a conflict which is shown in Figure \ref{Phase}(b). Phase is the unit of TSC, and only one phase can turn green at a time.
		\item \textbf{Neighbor intersection:} The intersections which directly connect to the current intersection. In an informal road network, each intersection usually has at most four neighbors.
		\item \textbf{Waiting vehicle:} If a vehicle on an entering lane has a speed lower than a threshold, then we define it as a \textit{waiting vehicle}, which means it is slowing down to wait for the red light.  
	\end{itemize}
	\begin{figure}[htbp]
		\centering
		\subfigure[Eight main directions in a single intersection]{
			\begin{minipage}[b]{0.6\linewidth}
				\centerline{\includegraphics[width=6cm]{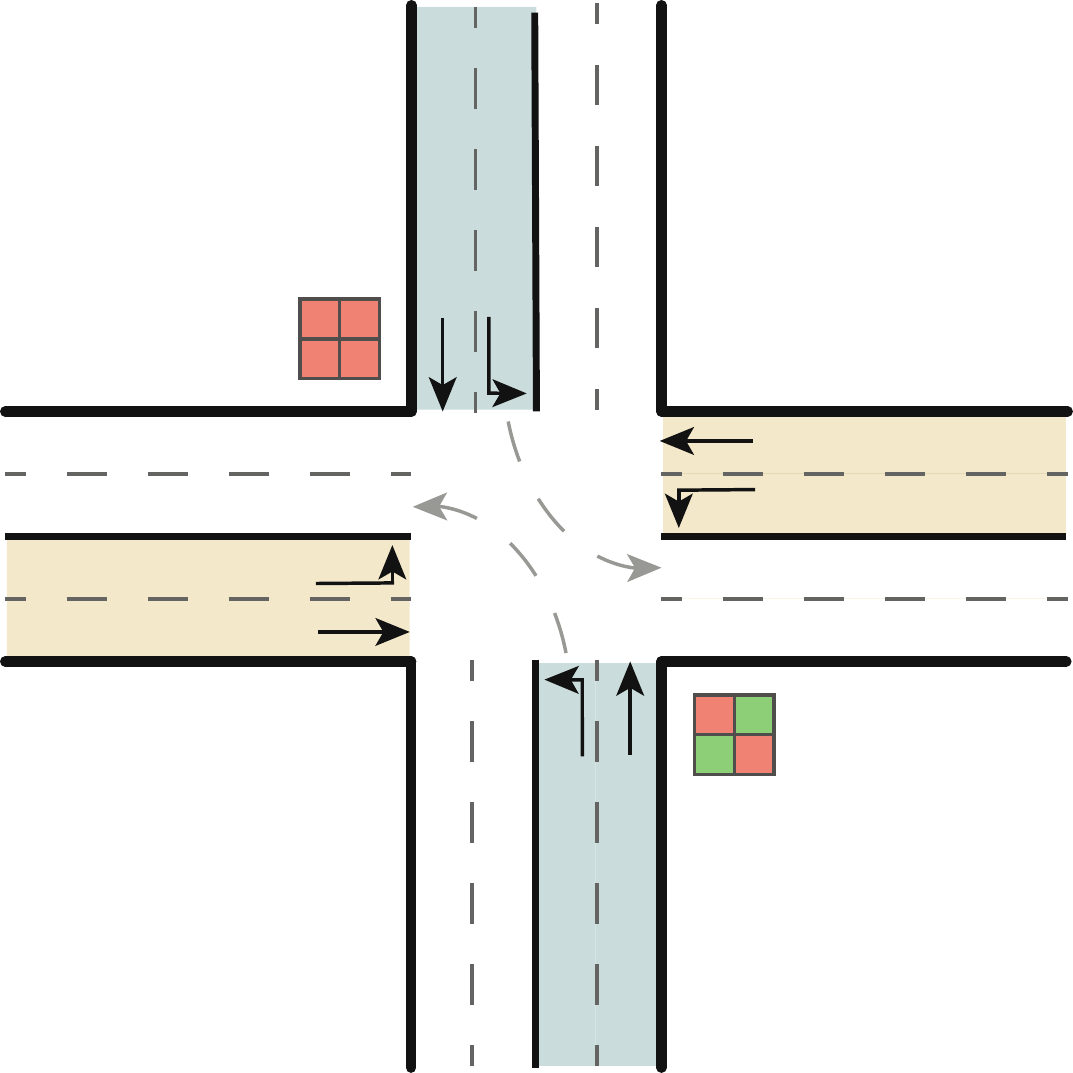}}
			\end{minipage}    
		}
		\subfigure[Eight kinds of primary phases]{
			\begin{minipage}[b]{0.6\linewidth}
				\centerline{\includegraphics[width=4cm]{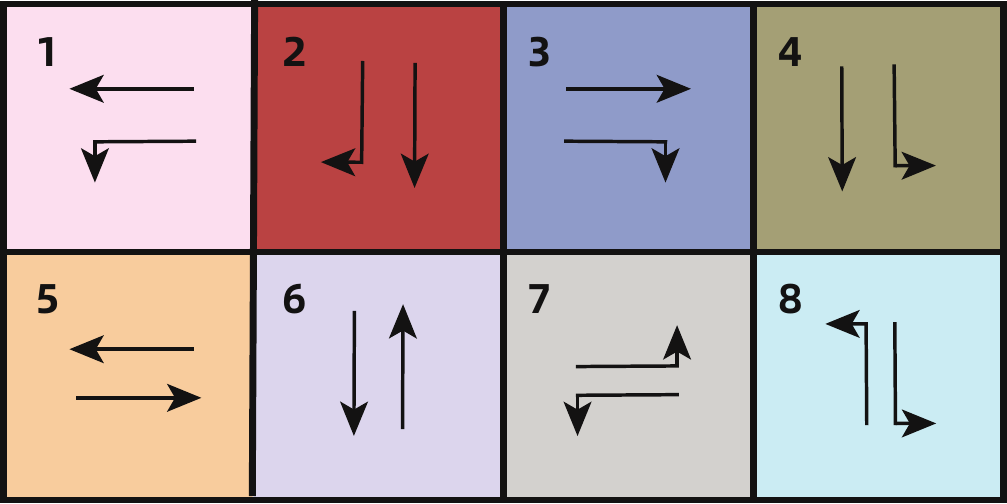}}
			\end{minipage}
		}
		\caption{The foundation of the traffic problem; Phase is the fundamental unit of the TSC problem. The two directions in each phase never conflict.}
		\label{Phase}
	\end{figure}
	
	By using DRL, we regard the TSC problem as MDP. An individual DRL agent controls one of the intersections. They need to observe part of the environmental state $ \mathcal{O} $ and get actions $ \mathcal{A} $ according to these observations to determine which phase in the intersection needs to turn green. The effect of control is fed back from the environment in the form of reward $ \mathcal{R} $. The goal of the DRL agent is to maximize the reward function by continuously exploring and exploiting based on constant interaction with the environment. In this paper, the problem requests to reduce the length of the queue $q_l$ or the travel time $ T_w $ in the road network. To make this problem more suitable for DRL, we can first abstract it into these parts $ \mathcal{<O, A, P, R, \pi,  \gamma >} $:
	\begin{figure}[htbp]
		\centerline{\includegraphics[width=8cm]{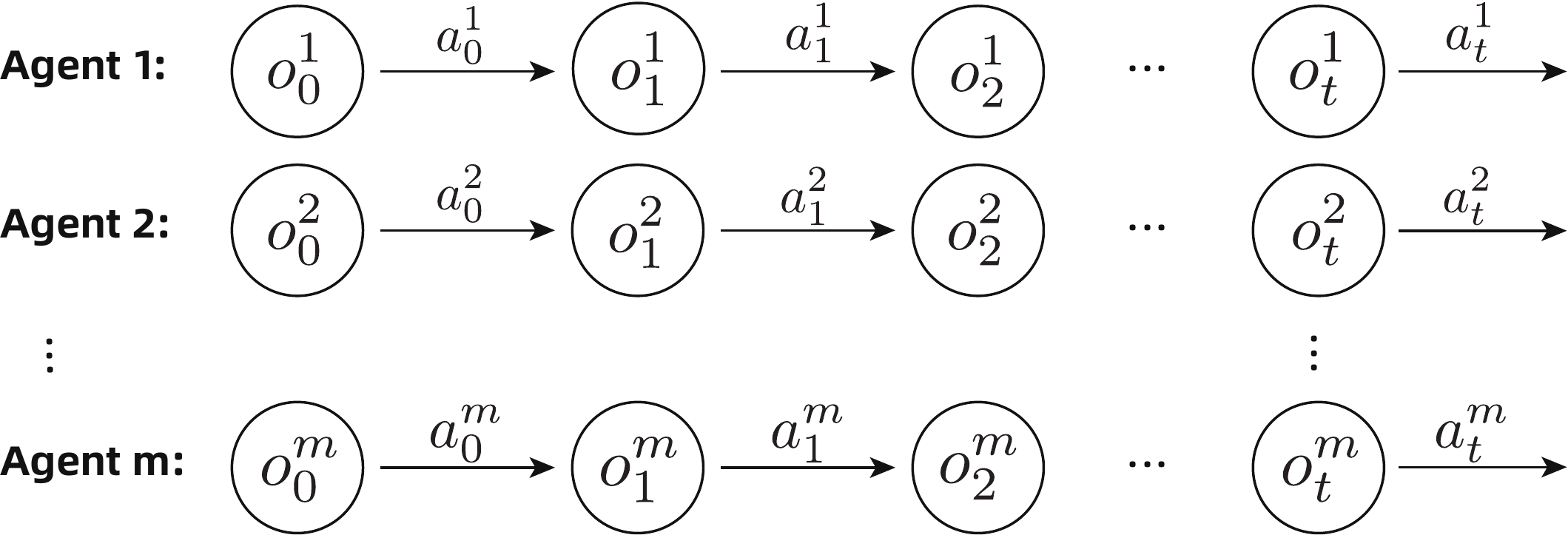}}
		\caption{The TSC problem is regarded as MDP. Each intersection is controlled by a unique agent that can implement a DRL algorithm and gain an optimal strategy of action decision.}
		\label{fig}
	\end{figure}
	\begin{itemize}
		\item \textbf{Observation $o_t^{i}$ :}  $\boldsymbol{o}_{t}^{i}=\left(o_{1}^{i}, \ldots, o_{t}^{i}\right)$, where $ \boldsymbol{o}_{t}^{i}\in \boldsymbol{\mathcal{O}_t}^{i} $. Every agent observes the length of the vehicle queue on entering lanes of their intersection. Moreover, to cater to the design of the proposed \textit{$ \gamma $-Reward} algorithm, we also need to observe the number of vehicle on the exiting lanes which connect to neighbor intersections.
		\item \textbf{Action $ a_t^{i} $:}  $\boldsymbol{a}_{t}^{i}=\left(a_{1}^{i}, \ldots, a_{t}^{i}\right)$, where $ \boldsymbol{a}_{t}^{i}\in \boldsymbol{\mathcal{A}_t}^{i} $. Action can be easily set as the serial number of phase which is chosen to be green.
		\item \textbf{Transition probability $ \mathcal{P} $:} $ \mathcal{P}{(o_{t+1}}^{i}|{o_t}^{i},{a_t}^{i}) $ describes the probability from state $ {o}_t^{i} $ to the next potential state $ {o_{t+1}}^{i} $.
		\item \textbf{Reward $ r_t^{i} $:} After executing each action $ a_t^{i} $, we can get a return information to judge whether $ a_t^{i} $ is good enough for $ o_t^{i} $. We use the number of waiting vehicle on the entering lanes as a raw reward. For amendatory reward, we use $ R_t^{i} $ as a representation.
		\item \textbf{Policy  $ \pi $:} Policy is what agents need to learn in DRL. It represents the goal of reducing travel time and increasing average speed. For a single agent, $\pi^{i}:{\mathcal{O}_t}^{i} \mapsto \mathcal{A}_t^{i}$.
		\item \textbf{Discount rate $ \gamma' $:} This factor is the common meaning used in Temporal-Difference (see Appendix A.1). To avoid confusion with \textit{$ \gamma $-Reward} , we use $\gamma'$ here to replace the original symbol $ \gamma $.
	\end{itemize}
	
	By using the Bellman equation, the relationship among these parameters can be formulated. We can gain the optimal phases after iterating these equations:
	\begin{equation}
	\begin{aligned}
	Q(o_t^i,a_t^i)&=Q(o_t^i,a_t^i)+\alpha (r^i_t+\gamma' \max_{a_{t+1}^i}Q(o_{t+1}^i,a_{t+1}^i)-Q(o_{t}^i,a_{t}^i))\\
	a_t^i&=\arg \max Q(o_t^i,a_t^i)
	\end{aligned}
	\end{equation}
	
	\subsection{Proposed Spatial Differentiation Function}
	Coordination among agents plays a critical role in MARL algorithms, either centralized or decentralized. In this paper, we propose a coordination mechanism among distributed agents. Each agent is based on Dueling-Double-Deep Q Network (D3QN)\cite{mnih2015human}\cite{VanHasselt2016}\cite{Wang2016}, which is one of the best Q value-based model until now\cite{hessel2017rainbow}. A detailed description of it can be found in Appendix A. We found that some studies already use D3QN directly on the TSC problem \cite{liang2019deep}, but it can only be used as an independent Q-learning (IQL) method in the multi-intersection problem. For the TSC problem, the idea of distributed agents is wise and what we need is an appropriate decentralized coordination mechanism. Therefore, we propose \textit{$ \gamma $-Reward} framework for coordination (Figure \ref{interaction}).
	\begin{figure}[htbp]
		\centerline{\includegraphics[width=9cm]{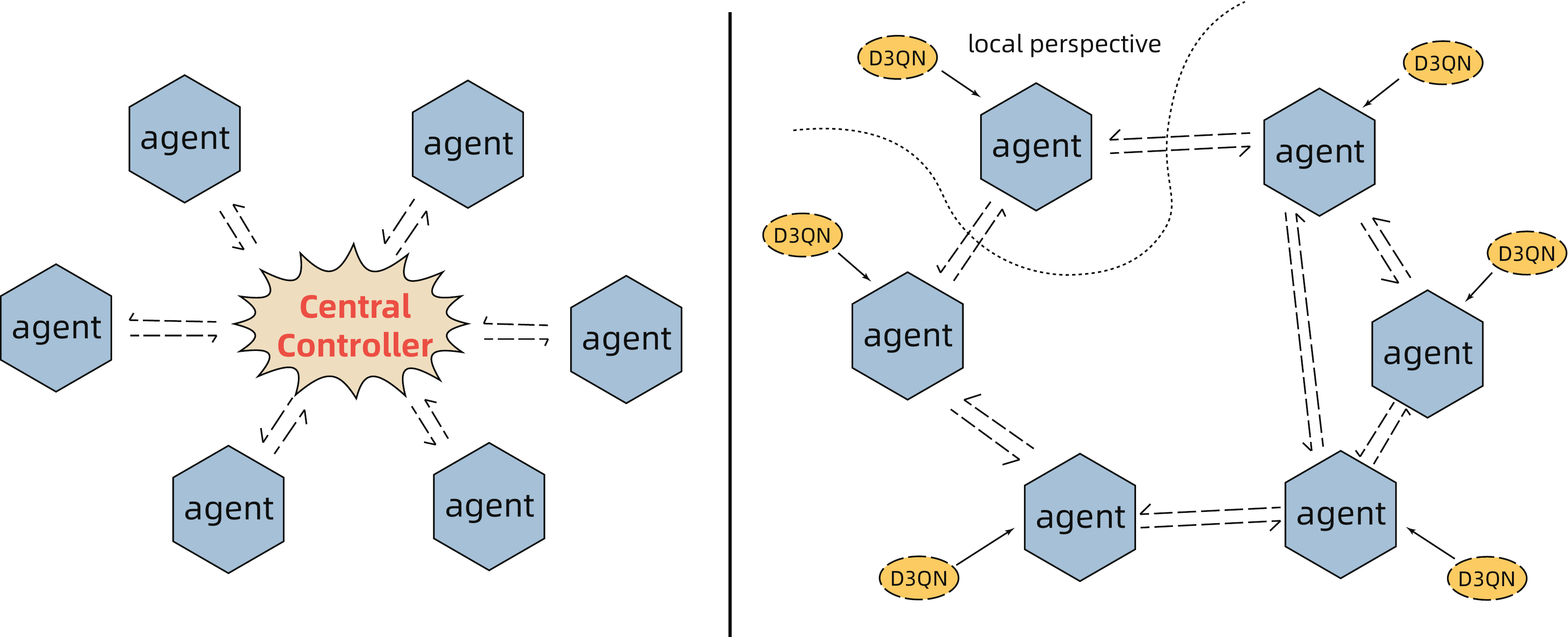}}
		\caption{ Left: MARL algorithms with centralized setting. Right: $\gamma$-Reward framework which is a decentralized one with network agents. $\gamma$-Reward contains a coordination mechanism proposed based on D3QN.}
		\label{interaction}
	\end{figure}
	
	The basic theory of DRL inspires the main idea of this article.  For the TSC problem, not only the temporal decision should be regarded as MDP, the road network itself is more like a Markov Chain since the decision of the current intersection will affect other intersections in the future. So it should consider not only the TD-error (see Appendix A.1) in time, but also a “TD-error” in space which is defined as the \textit{spatial differentiation}.

	\begin{figure}[htbp]
		\centerline{\includegraphics[width=6cm]{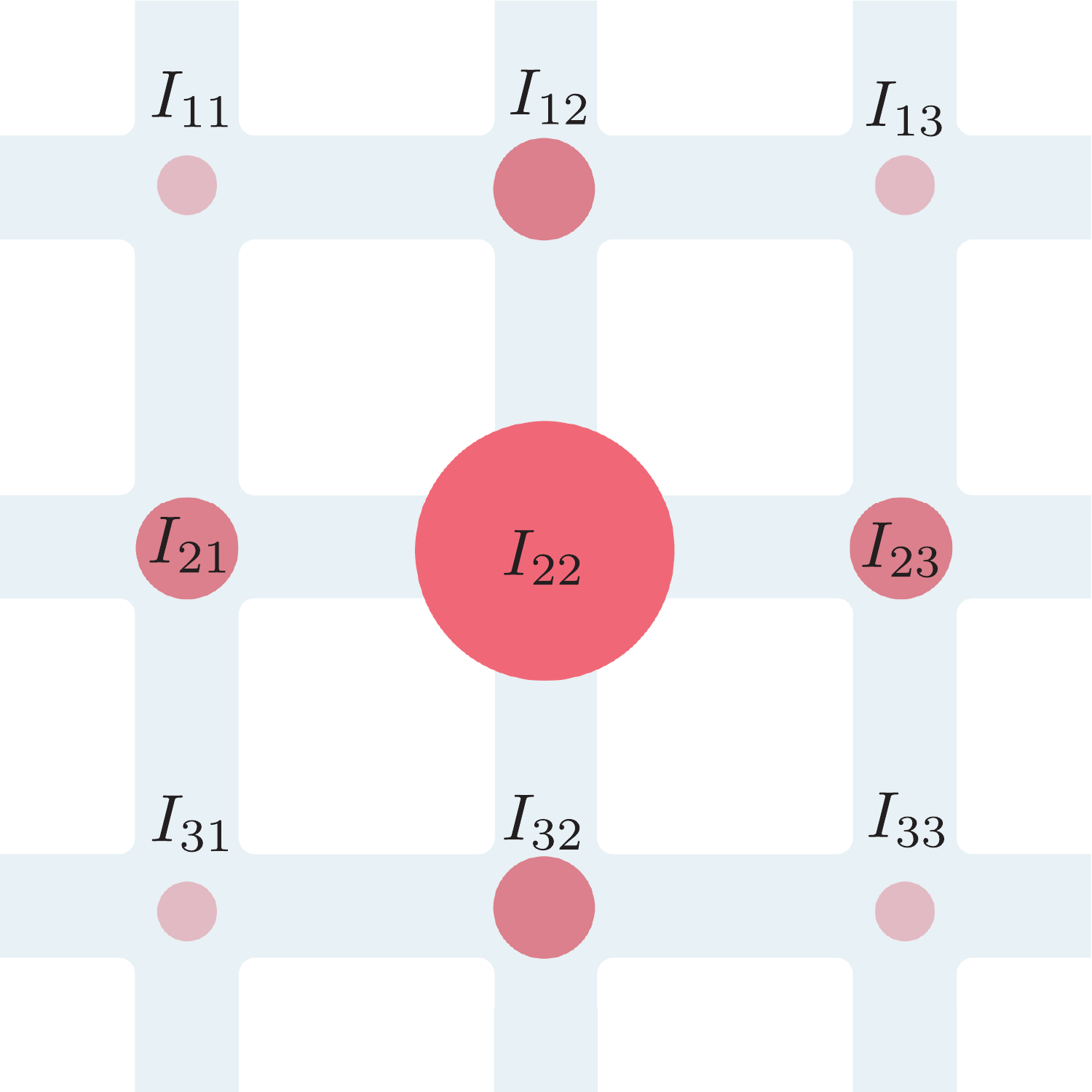}}
		\caption{Diagram for spatial differentiation}
		\label{gamma-reward}
	\end{figure}
	
	Figure \ref{gamma-reward} shows a diagram of multi-intersection road network with $ 3\times 3 $ intersections. Since the decision of intersection $ I_{22} $ at time $ t $ will affect the next intersection $ I_{23} $ at time $ t+n $, the result of intersection $ I_{22} $ at time $ t $ should be affected not only by the current intersection reward $ r^{I_{22}}_t $, but also by the reward $ r^{j}_{t+ n} $ of the surrounding intersections at time $ t+n $, where $ j\in{I_{12}, I_{21}, I_{23}, I_{32}} $. The formula of \textit{spatial differentiation} is as follows:
	\begin{equation}\label{GR}
	\begin{aligned} &R^{I_{22}}_t =r^{I_{22}}_t \cdot
	\left\{1+\gamma \cdot \text {tanh} \left[ \sum_{j \in \mathcal{N}_i}\left(\frac{R^{j}_{t+n}}{r^{j}_t}-c\right)\right]\right\}
	\end{aligned}
	\end{equation}
	
	Where $ \mathcal{N}_{I_{22}}=\{I_{12}, I_{21}, I_{23}, I_{32}\} $ represents the four intersections around intersection $ I_{22} $. The parameter $ n $ is called \textit{delay span}  which represents the time span of most vehicles reach the  next intersection after current action $ a_t $. It is related to the length of the road, the average velocity on the exiting lane and sample interval. The whole formula can also be treated as value function of reward, thus it is a \textit{spatial differentiation} of \textit{reward-value function}.
	
	In Equation \ref{GR}, $ {R^j_{t+n}}/{r^j_t} $ shows the change of traffic capacity at intersection $ j $ between time $ t $ and $ t+n $. If $ {R^j_{t+n}}/{r^j_t} $ is greater than threshold $c$,  it indicates that the traffic capacity of the intersection $ j $  is deteriorated, in other words, the decision of the intersection $ I_{22} $ at the time $ t $ will cause the adjacent intersection $j$ to be more congested; conversely, $ {R^j_{t+n}}/{r^j_t} $ less than $c$ indicates that the capacity of intersection $j$ is improved, and the decision of the intersection $I_{22}$ at time $ t $ is good for $ j $. 
	Through some region transformations, the differential item is served as a penalty. It finally multiplies by a spatial discount factor $ \gamma $ as an amendment to reward. Since the training goal of DRL is to maximize the reward function, the existence of the penalty item forces the agent to pay attention to the situation around the intersection while improving its own policy. 
	
	This equation is also in line with the idea of MARL with networked agents which dates back to Varshavskaya et al. \cite{varshavskaya2009efficient}.  $\mathcal{QD}$-learning\cite{kar2013cal} further gives a convergence proof of this kind of MARL algorithm. It incorporates the idea of \textit{consensus + innovation} to the standard Q-learning algorithm and has the following Q-value update equation:
	
	\begin{equation}\label{QD}
	\begin{aligned} 
	Q(o_t^i,a_t^i) &\leftarrow Q(o_t^i,a_t^i)\\&+\alpha_{t, o, a}\left[r^{i}_t+\gamma' \max _{a_{t+1}^i \in \mathcal{A}} Q(o_{t+1}^i,a_{t+1}^i)-Q(o_{t}^i,a_{t}^i)\right]\\&-\beta_{t, o, a} \sum_{j \in \mathcal{N}_{t}^{i}}\left[Q(o_t^i,a_t^i)-Q(o_t^j,a_t^j)\right]
	\end{aligned}
	\end{equation}
	
	where the term after $\alpha_{t,o,a}$ denotes a local \textit{innovation} potential that incorporates newly obtained observations, and the term after $\beta_{t,o,a}$ is a \textit{consensus} potential (agent collaboration) which captures the difference of Q-value estimates from its neighbors.
	
	Analogously, our Q-value update equation can be changed as follows:
	\begin{equation}
	\begin{aligned}
	Q(o_t^i,a_t^i)=&Q(o_t^i,a_t^i)+\alpha \{ r^i_t+\gamma\cdot r^i_t \cdot \text {tanh} [ \sum_{j \in \mathcal{N}_i}(\frac{R^{j}_{t+n}}{r^{j}_t}-c)]\\&+\gamma' \max _{a_{t+1}^i \in \mathcal{A}}Q(o_{t+1}^i,a_{t+1}^i)-Q(o_{t}^i,a_{t}^i)\}
	\end{aligned}
	\end{equation}
	
	The spatial differentiation term denotes the consensus like $\mathcal{QD}$-learning and we further construct the communication in the temporal domain and the spatial domain at the same time. Although Equation \ref{GR} uses future information which sounds contrary to the causality, it is achievable in the programming. Since we use  Q-learning as a basic model which is off-policy, it saves the trajectory of state-action pair and the obtained reward into a replay buffer. In this way, the raw reward $ r^i_t $ is saved first and then corrected after $n$ steps. Therefore the calculation process can be realized (see Appendix B for pseudocode).

	\subsection{Attention Mechanism for $ \gamma$-Reward}
	\begin{figure*}[htbp]
		\centerline{\includegraphics[width=1\linewidth]{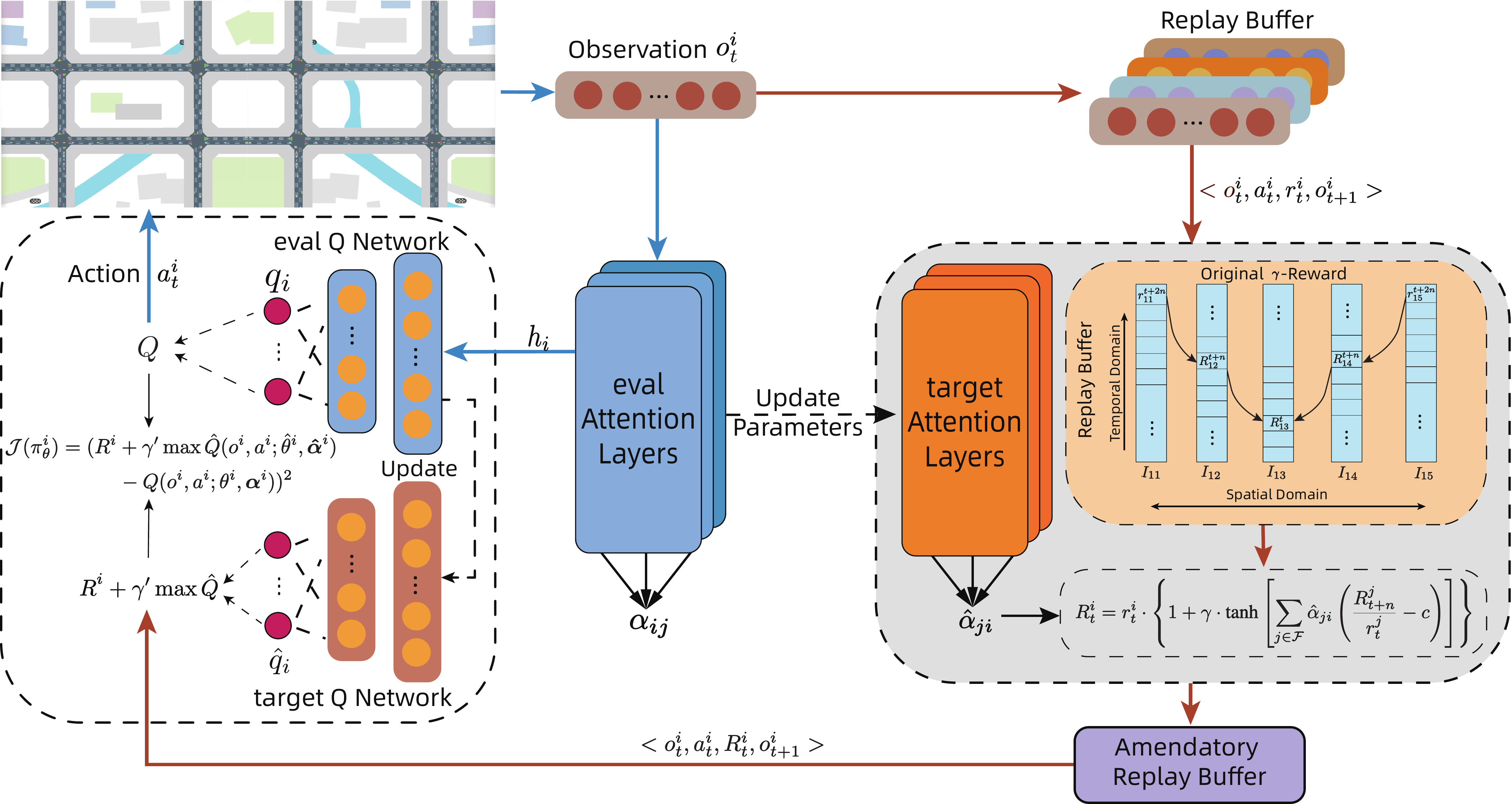}}
		\caption{Framework of the proposed \textit{$ \gamma $-Attention-Reward} model; In the inner cycle, eval Attention Layers and eval Q Network in blue are used to evaluate real-time Q value for the control. In the external cycle, target system in orange is used to predict long-term impact for improving the performance of Q network and updated periodically. At the right of this figure, we illustrate the message mechanism in original \textit{$\gamma $-Reward} by the example of a $Arterial_{1\times5}$ road network.}
		\label{Network}
	\end{figure*}
	The \textit{spatial differentiation} formula proposed in the previous section is based on the fact that each intersection in the road network has the same situation. In other words, the levels of them are equal. But in reality, the levels of the intersections in the road network are different, some have only one or two lanes, and others may have four to six. Imagine if the intersection $ I_{22} $ in Figure \ref{gamma-reward} is a two-way road which has eight lanes, the intersection $ I_{23} $  is same as $ I_{22} $, while on the other side, the intersection  $ I_{32} $  is a two-way which only has two lanes. The decision of intersection $ I_{22} $ in $ I_{22}\Rightarrow I_{23} $ and $ I_{22}\Rightarrow I_{32} $ is inevitably different. The discharge of intersection $ I_{22} $ can easily lead to excessive congestion at intersection $ I_{32} $, but it may not be very serious for intersection $I_{23} $. In addition to the number of lanes at the intersection, the length between each intersection is different, which means that the maximum congestion length each intersection can accept is also different. In summary, when using a \textit{spatial differentiation} formula correction at an intersection, there are different influence weights for different intersections.
	
	The problems mentioned above can indeed be solved by setting different thresholds to get weights, like \cite{liu2017distributed}. However, the actual road network situation is very complicated. There is no way to include all parameters into consideration. Therefore, we can learn the rules from the traffic data. In this respect, the attention mechanism gives us a good solution.
	
	Attention mechanism \cite{Bahdanau2014}\cite{Cho2014}\cite{Vaswani2017}  is an algorithm first proposed to solve “seq2seq” translation problem in NLP. Attention can be interpreted broadly as a vector of importance weights: To predict an element, such as a word in a sentence, attention vector can be used to estimate how strongly it is related to other elements, and the sum of its values can be used as an approximation of the target. It breaks the limits of Euclidean distance between data, captures long-range dependencies in the sentences, and provides smoother translations. 
	
	In addition to sequence data, Attention can also be used for other types of problems. In the graphics world, the GCN \cite{kipf2016semi} tells us that combining local graph structures with node features can achieve good performances in node classification tasks. However, the way GCN combines the characteristics of neighboring nodes is closely related to the structure of the graph, which limits the generalization ability of the trained model on other graph structures. The GAT \cite{velivckovic2017graph} proposes a weighted summation of neighboring node features using the attention mechanism. The weights of the neighboring node feature depend entirely on the node characteristics and are independent of the graph structure.

	Recent research has begun to introduce the idea of Attention to MARL algorithms. A Multi-Agent Actor-Critic (MAAC) algorithm has been proposed that combines attention mechanism\cite{Iqbal2018}. MAAC encodes the state of the surrounding agents and obtains the contribution value of the surrounding agents to the current agent through the attention network, together with $(o, a) $ of the current agent as an input, the Q value is obtained through an MLP. While the Q  network is updated in reverse, the attention network is updated, and the attention scores of the surrounding agents for the current agent are also corrected. Colight applied attention mechanism to the TSC problem of the large-scale road network, it encodes the state and directly obtains the Q value through the Multi-head Attention network.
	
	\subsubsection{Attention}
	Every agent can interact with their environment and get the observation on time. We first need to embed the observation from the environment by applying a layer of Multi-layer Perception (MLP):
	\begin{equation}
	z_{i} =W_l o_{i}+b
	\end{equation}
	
	To get the weight of the intersection $ i $ to the adjacent intersection $ j $, we need to combine their hidden variables $ z_i, z_j $ by following dot product:
	
	\begin{equation}
	e_{ij}=z_j^T W^T_k W_q z_i
	\end{equation}

	$ e_{ij} $, represents the influence of the adjacent intersection $ j $ on the current intersection $ i $. It should be noted that the influence between them may not be necessarily equal. Then we normalize them by softmax function:
	\begin{equation}\alpha_{i j}=\frac{\exp \left(e_{i j}\right)}{\sum_{k \in \mathcal{N}_i} \exp \left(e_{i k}\right)}\end{equation}
	
	Impact value $ v_{ij} $ can be calculate as $ v_{ij}=\alpha_{ij} z_j$, which means the value $ i $ needs to consider from $ j $. Finally, adding them together and passing the RELU activation function, the final characterization of the intersection $ i $ is obtained:
	\begin{equation}h_{i} =\sigma\left(\sum_{j \in \mathcal{N}_i} \alpha_{i j} z_{j}\right)\end{equation}
	
	\subsubsection{$ \gamma $-Attention-Reward}
	We use the attention mechanism to make \textit{spatial differentiation} function more perfect and interpretable by adding an attention score before the sum operation.
	\begin{equation}\label{GAR}
	\begin{aligned} &R^{i}_t =r^{i}_t \cdot
	\left\{1+\gamma \cdot \text {tanh} \left[ \sum_{j \in \mathcal{N}_i}\hat{\alpha}_{ji}\left(\frac{R^{j}_{t+n}}{r^{j}_t}-c\right)\right]\right\}
	\end{aligned}
	\end{equation}
	
	Attention score is updated together with the policies:
	\begin{equation}\label{J}
	\mathcal{J}(\pi_{\theta}^i)=(R^i+\gamma' \max{\hat{Q}(o^i, {a}^i;\hat{\theta}^i, \boldsymbol{\hat{\alpha}}^{i})}-Q(o^i,a^i;\theta^i,\boldsymbol{\alpha}^i))^2
	\end{equation}
	
	It is worth emphasizing that $\alpha_{ij} $ represents the importance from $ j $ to $ i $, but for \textit{$ \gamma $-Reward} we seek the influence from $ i $ to $ j $. So in Equations \ref{GAR} and \ref{J}, we need to use $ \alpha_{ji} $ for amending rewards of agent $ i $.
	
	Since the attention score $\alpha_{ji} $ is a real-time updated value, this paper uses it as an evaluation metric to dynamically assess the impact of surrounding intersections based on dynamic traffic data. While the reward is also an evaluation indicator, which is timely feedback obtained after performing an action at a particular state, used to evaluate the quality of the state-action pair. If we introduce $\alpha_{ij} $ into the calculation of reward-value and update it in real-time, there is bound to be a problem. This causes the Attention layer to update the direction intentionally, which reduces the impact of essential neighbor intersections and increases those with less traffic flow to increase the reward-value. In this way, the introduction of the attention score will be even worse than the original \textit{$\gamma$-Reward}. To solve this problem, we can follow the Q-learning algorithm and use the off-policy idea to get target attention scores $\hat{\alpha}_{ij}$ in the reward calculation using target Attention layers. Its network parameters are updated together with target Q parameters $\hat{\theta}$. The whole framework of the proposed \textit{$ \gamma $-Attention-Reward} model is shown in Figure \ref{Network}.
	
	In Colight, there is a section devoted to the selection of the hyper-parameter neighbor scope. It is found through experiments that the larger the $\left|\mathcal{N}_{i}\right|$ is, the better the performance is. But when it is greater than 5, it takes more time to learn. This is because intersections within the scope of $\left|\mathcal{N}_{i}\right|$  needs to aggregate all the observations into one agent, and the total number of agents is still equal to the intersection, which will inevitably lead to an increase in the amount of calculation. However, unlike Colight, \textit{$ \gamma $-reward} does not need to consider the size setting of the neighbor number. We can merely consider the neighbor number as a constant 5, which contains the current intersection and the four intersections directly connected to it. Note that, we do not need to employ Multi-head Attention which leads to centralization. For the information of further intersections, recursiveness in the \textit{$ \gamma $-Reward} formula can work. 
	
	From the original \textit{$\gamma $-Reward} part in Figure \ref{Network}, we can figure out its principle. The row in the figure represents the state information collected in time series and the column represents each intersection. To prevent too much confusion, only the \textit{$\gamma$-Reward} process of intersection $I_{13}$ is shown as a demonstration, and the interval of the scale variable $ n $ is also not shown here. The value of $R_{13}^t$ is related to $R_{12}^{t+n}$ and $R_{14}^{t+n}$. By analogy, these two rewards are related to $R_{11}^{t+2n}$ and $R_{15}^{t+2n}$ respectively. It is worth noting that only a two-dimensional replay buffer is drawn here, in fact, the real intersection has four or more surrounding intersections, so it should be a three-dimensional gradient.
	
	\section{Theoretical Analysis in Game Theory}
	In this section, we establish theoretical results for the proposed algorithms. The key challenge of MARL algorithms is that the decision process may be non-stationary\cite{omidshafiei2017deep}\cite{laurent2011world}. Since an agent is not aware of actions from other agents and lack of communication, the transition probabilities $ \mathcal{P}(o_{t+1}^{i}|o_{t}^{i}\in \mathcal{O}_t^{i}, a_t^{i}\in \mathcal{A}_t^{i})$ are not stationary and change as the other agents change. So we first need to demonstrate the decision process is stationary by using the proposed algorithms. Specifically, MARL can be regarded as a game model\cite{aragon2020traffic}, and we prove that they can converge to Nash equilibrium. The argument is started with the following definitions.
	\begin{df}\label{df1}
		A decentralized MARL decision process is \textit{stationary} (or homogeneous), \textit{iff}, for each agent $ i $ and all $ p,q\in \mathbb{N} $, $ o^{i} \in \mathcal{O}^{i}$, $\boldsymbol{a}^{i} \in \mathcal{A}^{i}$\cite{omidshafiei2017deep}
		\begin{equation}
		\begin{aligned}
		\sum_{\boldsymbol{a}_{p}^{-i}\in \boldsymbol{A}_{p}^{-i}} \mathcal{P}_i\left(o_{p+1}^{i}| o_{p} ^{i},  \left\langle a_{p}^{i}, \boldsymbol{a}_{p}^{-i}\right\rangle  \right) =\\
		\sum_{\boldsymbol{a}_{q}^{-i}\in \boldsymbol{A}_{q}^{-i}} \mathcal{P}_i\left(o_{q+1}^{i}|o_{q} ^{i},\left\langle a_{q}^{i}, \boldsymbol{a}_{q}^{-i}\right\rangle \right)
		\end{aligned}
		\end{equation}
		and $ \mathcal{P} $ for global state $ s\in \mathcal{S} $ must be stationary either.
		\begin{equation}
		\begin{aligned}
		\sum_{\boldsymbol{a}_{p}^{-i}\in \boldsymbol{A}_{p}^{-i}} \mathcal{P}\left(s_{p+1}^{i}| s_{p} ^{i},  \left\langle a_{p}^{i}, \boldsymbol{a}_{p}^{-i}\right\rangle  \right) =\\
		\sum_{\boldsymbol{a}_{q}^{-i}\in \boldsymbol{A}_{q}^{-i}} \mathcal{P}\left(s_{q+1}^{i}|s_{q} ^{i},\left\langle a_{q}^{i}, \boldsymbol{a}_{q}^{-i}\right\rangle \right)
		\end{aligned}
		\end{equation}
	\end{df} 
	where $ \boldsymbol{a}^{-i}=\boldsymbol{a} \backslash\left\{a^{i}\right\}$.
	
	Based on the definition of stationary MDP, we can define optimal global reward for proving the astringency of proposed methods by extending the definition by \cite{nguyen2014decentralized}. 
	\begin{df}\label{df2}
		For a stationary MDP, the global optimal reward can be decomposed into a sum of local optimal reward for each reward function $f_{i} \in \mathcal{F}$
		\begin{equation}
		\rho^{*}=\sum_{i=1}^{m} \rho_{i}^{*}
		\end{equation}
	\end{df} 
	\begin{proof}
		For a given stationary MDP, there must exists a stationary $\mathcal{P}_i^{\pi_*^i}$, where $ \pi_*^i $ is the optimal policy of agent $ i $
		\begin{equation}
		\rho_i^{\pi_*^i}=\sum_{o^{i}\in \mathcal{O}^{i}}\mathcal{P}_i^{\pi_*^i}(o^{i})f_i(o^{i}, a^{i}| a^{i}=\pi_*^i(o^{i}))
		\end{equation}
	\end{proof}
	
	\begin{df}\label{df3}
		In stochastic game, a Nash equilibrium point is a tuple of $ m $ strategies $ \left(\pi_{*}^{1}, \ldots, \pi_{*}^{m}\right) $
		such that for all global state $ s\in \mathcal{S} $ and $ i = 1,\dots,m $ \cite{hu2003nash}
		\begin{equation}
		\nu^i(s|\pi_{*}^{1}, \ldots, \pi_{*}^{m})\leq \nu^i(s|\pi_{*}^{1}, \ldots, \pi^{i}, \pi_*^{i+1}, \ldots, \pi_{*}^{m})
		\end{equation}
		for all $ \pi_{*}^i\in \Pi^i  $, where $ \Pi^i $ is the set of policies of total $ m $ agents.
	\end{df}
	
	\subsection{Stationarity of \textit{$\gamma $-Reward} series}
	First, we give the proof of stationarity. Unless the MDP is stationary, the model cannot guarantee convergence to the optimal.
	\begin{as}\label{as1}
		The original reward function can be represented as
		\begin{equation}
		r_t^{i}=f(\boldsymbol{o}_t^{i}, \boldsymbol{a}_t^{i})
		\end{equation}
		where $ \boldsymbol{o}^{i}, \boldsymbol{a}^{i} $ include state-action pair from time step $ 1 $ to $ t $.
		Assume that \textit{$ \gamma $-Reward} series are special reward function $ f(o,a) $.
		\begin{equation}\label{R}
		R_{t}^{i}=f\left(\left\langle \boldsymbol{o}_{t}^{i}, \boldsymbol{o}_{k}^{-i}\right\rangle ,\left\langle \boldsymbol{a}_{t}^{i}, \boldsymbol{a}_{k}^{-i}\right\rangle \right)
		\end{equation}
		$ k\in [1,\mathbb{N}] $.
	\end{as}
	\begin{as}\label{as2}
		As a continuing task without definite ending, the excepted reward $ \mathcal{G}_t^{i} $ of TSC problem is defined as following with a discount rate $ \gamma' $
		\begin{equation}
		\mathcal{G}_t \doteq r_{t+1}+\gamma' r_{t+2}+(\gamma')^{2} r_{t+3}+\cdots=\sum_{k=0}^{\mathbb{N}-t} (\gamma')^{k} r_{t+k+1}
		\end{equation}
	\end{as}
	\begin{as}\label{as3}
		The Q function is based on trajectory of expected return.
		\begin{equation}
		Q(o,a|\pi)=\sum\mathcal{P}(path_{o,a}|\pi)*\mathcal{G}(path_{o,a})
		\end{equation}
		\begin{equation}\label{argmax}
		a_t=\text{\textit{argmax} }  Q_{\max}(o,a| \pi)
		\end{equation}
	\end{as}
	
	\begin{theorem}\label{th1}
		With \textit{$\gamma $-Reward} as a coordination mechanism, the decision process of distributed DQN algorithm $ \mathcal{(RP)} $ is stationary.
	\end{theorem}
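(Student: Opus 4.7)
The plan is to verify the two identities of Definition \ref{df1} by arguing that, once \textit{$\gamma$-Reward} is adopted, the joint policy profile collapses to a time-invariant deterministic mapping from the joint observation to the joint action, which is enough to make the marginalized transition kernel independent of the time index.

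First, I would unfold the recursion in Equation \ref{GR}. Expanding $R_{t+n}^j$ on the right-hand side using its own \textit{$\gamma$-Reward} definition and iterating, one expresses $R_t^i$ as a deterministic function of raw rewards $r^{(\cdot)}_{(\cdot)}$ evaluated on a finite set of intersections at time indices $t,t+n,t+2n,\dots$. Because each $r^j$ depends only on $(o^j,a^j)$ through the underlying environment, the unfolded $R_t^i$ matches the form $f\!\left(\langle\boldsymbol o_t^i,\boldsymbol o_k^{-i}\rangle,\langle\boldsymbol a_t^i,\boldsymbol a_k^{-i}\rangle\right)$ required by Assumption \ref{as1}, certifying that the amended reward is a legitimate reward function on the joint history rather than a time-dependent perturbation.

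Second, by Assumption \ref{as2} the return $\mathcal G_t^i$ is the $\gamma'$-discounted sum of these amended rewards, and by Assumption \ref{as3} $Q(o^i,a^i\mid\pi)$ is the expectation of $\mathcal G^i$ along trajectories, with $a_t^i=\operatorname*{argmax}_a Q(o^i,a)$. Since the unfolded reward is a time-invariant functional of the joint state-action trajectory, the Bellman operator acts on a stationary MDP defined over the augmented joint-history state; its fixed point $Q^\ast$ induces a deterministic, time-invariant optimal policy $\pi_\ast^i:\mathcal O^i\to\mathcal A^i$ for every agent $i$. Fixing the joint profile $(\pi_\ast^1,\ldots,\pi_\ast^m)$, the marginal in Definition \ref{df1} becomes
\begin{equation*}
\sum_{\boldsymbol a_p^{-i}}\mathcal P_i\!\left(o_{p+1}^i\mid o_p^i,\langle a_p^i,\boldsymbol a_p^{-i}\rangle\right)\prod_{j\neq i}\pi_\ast^j(a_p^j\mid o_p^j),
\end{equation*}
which depends on $p$ only through $(o_p^i,a_p^i)$; evaluating at identical observation-action pairs at times $p$ and $q$ therefore gives equal values. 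Repeating the argument verbatim with the global state $s\in\mathcal S$ in place of $o^i$ yields the second identity of Definition \ref{df1}, completing the sketch.

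The main obstacle will be making the recursive unfolding rigorous. The \textit{$\gamma$-Reward} formula feeds \emph{future} amended rewards $R_{t+n}^j$ back into $R_t^i$, so the dependence chain extends arbitrarily far in space and in time. I would handle this via a geometric-decay argument: each recursion level multiplies by $\gamma\tanh(\cdot)$, whose magnitude is uniformly bounded by $\gamma$, so the contribution of an intersection at graph-distance $d$ is damped like $\gamma^d$ and the unfolded series converges absolutely on any finite-diameter road network. Showing that the limit is time-translation invariant, and handling boundary effects where the five-agent neighborhood is incomplete (e.g., edge intersections), is the technical content on which stationarity ultimately hinges.
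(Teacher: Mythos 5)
Your route is genuinely different from the paper's, and part of it is actually stronger: the paper does not unfold the recursion in Equation \ref{GR} at all --- it simply \emph{assumes} (Assumption \ref{as1}) that the amended reward has the joint form $f\left(\left\langle \boldsymbol{o}_{t}^{i}, \boldsymbol{o}_{k}^{-i}\right\rangle ,\left\langle \boldsymbol{a}_{t}^{i}, \boldsymbol{a}_{k}^{-i}\right\rangle \right)$, and then argues that because $\mathcal{(RQ)}$ therefore depends on the joint trajectory, the action $a_t^i=\arg\max \mathcal{(RQ)}$ implicitly encodes $\boldsymbol{a}_k^{-i}$, so that $\mathcal{(RP)}_t^i(o_{t+1}^i\mid o_t^i,a_t^i)=\mathcal{P}_t^i\left(o_{t+1}^i\mid o_t^i,\left\langle a_t^i,\boldsymbol{a}_k^{-i}\right\rangle\right)$, which is then declared to satisfy Definition \ref{df1}. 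Your recursive unfolding with the $\gamma\tanh(\cdot)$ geometric-decay bound would \emph{derive} what the paper assumes, which is a real improvement over the published sketch.

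However, the step where you actually verify the equality in Definition \ref{df1} has a genuine gap. You insert the policy weights $\prod_{j\neq i}\pi_\ast^j(a_p^j\mid o_p^j)$ and claim the resulting marginal ``depends on $p$ only through $(o_p^i,a_p^i)$.'' It does not: each factor $\pi_\ast^j(a_p^j\mid o_p^j)$ depends on the \emph{other} agents' observations $o_p^{-i}$, which are not pinned down by fixing agent $i$'s observation-action pair. Two times $p$ and $q$ with $o_p^i=o_q^i$ and $a_p^i=a_q^i$ can carry different $o_p^{-i}$ versus $o_q^{-i}$ and hence different weighted marginals; closing this requires the conditional law of $o^{-i}$ given $o^i$ to be time-invariant, which is essentially the stationarity you are trying to establish --- the argument becomes circular. (Note also that Definition \ref{df1} as written sums over $\boldsymbol{a}^{-i}$ \emph{unweighted}, so the policy-weighted quantity is not even the one the definition asks you to equate.) A secondary concern: your appeal to the fixed point $Q^\ast$ of the Bellman operator presupposes convergence, which the paper only establishes afterwards (its convergence theorem) and which in turn leans on stationarity, so you should either avoid invoking $Q^\ast$ here or make explicit that you only need existence of \emph{some} time-invariant greedy policy at each iterate, not convergence to the optimum.
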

	\begin{proof}
		According to Assumption \ref{as2} and Assumption \ref{as3}, $ Q $ value function with  \textit{$\gamma $-Reward} $ \mathcal{(RQ)} $ can be written as follow
		\begin{equation}
		\begin{aligned}
		\mathcal{(RQ)}_t^{i}(o,a|\pi)&=\sum_{t=0}^{\mathbb{N}}\mathcal{(RP)}_t^i(path_{o^{i},a^{i}}|\pi)*(\mathcal{RG})_i(path_{o^{i},a^{i}})\\
		&=Q\left(\left\langle \boldsymbol{o}_{t}^{i}, \boldsymbol{o}_{k}^{-i}\right\rangle ,\left\langle \boldsymbol{a}_{t}^{i}, \boldsymbol{a}_{k}^{-i}\right\rangle \lvert  \pi \right)
		\end{aligned}
		\end{equation}
		
		The calculation of $ \mathcal{(RQ)} $ is related to the except reward path. $ \mathcal{(RG)}_t^i $ is a discounted sum of amendatory reward $ R_t^{i} $, which is bound up with not only $ (o^{i},a^{i}) $, but also $ (o^{-i},a^{-i}) $ from the other agents. Since $ \mathcal{(RG)}_t^i $ records the future trajectory of amendatory reward from time step $ t $ to the end of episode, it must contain the previous and posterior state-action pair as a vector, like Equation \ref{R} shown in Assumption \ref{as1}. According to the above two properties, we can decompose $ \mathcal{(RP)} $ by using Equation \ref{argmax}.
		\begin{equation}
		\begin{aligned}
		\mathcal{(RP)}_t^i(o_{t+1}^i\lvert {o}_t^i,{a}_t^i)&=\mathcal{P}_t^i(o_{t+1}^i\lvert o_t^i, \text{\textit{argmax} }  \mathcal{(RQ)}^i_{t\max}(\boldsymbol{o},\boldsymbol{a}))\\
		&=\mathcal{P}_t^i\left(o_{t+1}^i\lvert o_t^i, \left\langle a_t^i, \boldsymbol{a}_k^{-i}\right\rangle \right)
		\end{aligned}
		\end{equation}
		where $ k\in [1,\mathbb{N}] $. Obviously, $ \mathcal{(RP)} $ satisfy the property in Definition \ref{df1}.
		Thus, the process with \textit{$ \gamma $-Reward} series is a stationary process.
		
	\end{proof}

	\subsection{Convergence of \textit{$\gamma $-Reward} series}
	\begin{theorem}
		\textit{Spatial differentiation} formula in original \textit{$\gamma$-Reward} can lead the \textit{reward value function} to converge to local optimal reward.
	\end{theorem}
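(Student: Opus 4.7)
The plan is to bootstrap on the stationarity established in Theorem~\ref{th1} together with the boundedness of the amended reward, and then invoke a classical tabular Q-learning convergence argument applied agent-by-agent. First I would observe that the $\tanh$ saturation in Equation~\ref{GR} immediately yields $R_t^i\in r_t^i\cdot[1-\gamma,\,1+\gamma]$, so whenever the raw reward $r_t^i$ is bounded the amended reward $R_t^i$ is bounded uniformly across $t$ and across recursion depth. This is precisely what makes the replay-buffer-based computation well-posed: at each update, $R_{t+n}^j$ has already been written to the buffer before $R_t^i$ is (re)evaluated, so the recursion contains no unresolved forward reference and can be analyzed as an ordinary bounded reward signal.

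Next I would show that the effective single-agent Bellman operator is a $\gamma'$-contraction in the sup norm. Define
\[
(\mathcal{T}Q)(o,a)=\mathbb{E}\!\left[R_t^i+\gamma'\max_{a'}Q(o',a')\,\big|\,o,a\right];
\]
by Theorem~\ref{th1} the transition kernel is stationary, and by the boundedness argument above the conditional expectation of $R_t^i$ is a bounded, time-homogeneous function of $(o,a)$. The standard contraction inequality $\|\mathcal{T}Q_1-\mathcal{T}Q_2\|_\infty\le\gamma'\|Q_1-Q_2\|_\infty$ therefore holds, and the Banach fixed-point theorem produces a unique fixed point $Q_i^\ast$. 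Combining this with Robbins--Monro conditions on the learning-rate schedule underlying D3QN gives almost-sure convergence of the Q-iterates to $Q_i^\ast$ via the stochastic approximation argument of Jaakkola, Jordan and Singh, which in turn forces $R_t^i$ to stabilize on trajectories generated by the induced greedy policy.

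Finally, I would connect this limit to Definition~\ref{df2}: because each agent independently solves a Bellman equation with amended reward $R^i$ over its own $\mathcal{O}^i\times\mathcal{A}^i$, the converged policy realizes the per-agent optimum $\rho_i^\ast$, and the spatial correction term on average cancels (in the sense that the $\tanh$ argument is near its neutral point $0$) precisely when neighbours are themselves near-optimal, which is the usual local-equilibrium signature. The hard part will be rigorously justifying that $\mathbb{E}[R_t^i\mid o,a]$ is truly time-homogeneous, since the correction term involves neighbours' future rewards, which are random variables jointly determined by all agents' evolving policies. I would address this by appealing to the joint stationarity in Definition~\ref{df1} established in Theorem~\ref{th1}: stationarity of the joint process implies that the conditional expectation of the ratio $R^j_{t+n}/r^j_t$ depends only on $(o,a)$ and not on $t$, so the amended reward qualifies as a legitimate stationary reward function for the classical Q-learning analysis, and the local-optimal claim follows.
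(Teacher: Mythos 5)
Your route is genuinely different from the paper's. The paper does not analyze each agent as an isolated MDP with a bounded stationary reward; it rewrites the whole coupled update in the \emph{consensus + innovation} form of $\mathcal{QD}$-learning (Equation \ref{QD}), builds the graph Laplacian $L=D-A$ of the intersection network, sets $\beta=-\alpha\gamma$ so that the $\tanh$ correction plays the role of the consensus potential, and then invokes the mixed-time-scale stochastic-approximation convergence result of $\mathcal{QD}$-learning for the \emph{joint} vector $\boldsymbol{Q}_{\boldsymbol{o},\boldsymbol{a}}(t)$ of all agents' Q-values. Your decoupled argument --- $\tanh$-saturation gives $R_t^i\in r_t^i\cdot[1-\gamma,\,1+\gamma]$, hence a bounded reward, hence a $\gamma'$-contraction and a Jaakkola--Jordan--Singh limit per agent --- is cleaner and more elementary where it works, but it pushes the entire multi-agent difficulty into the single step you yourself flag as ``the hard part,'' and that step does not go through as you propose to close it.

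The gap is this: Theorem \ref{th1} (via Definition \ref{df1}) asserts stationarity of the \emph{transition kernel} $\mathcal{P}_i$, marginalized over the other agents' actions. It says nothing about the conditional law of the ratio $R^j_{t+n}/r^j_t$ given $(o_t^i,a_t^i)$. That ratio is determined by what the neighbours do during $[t,t+n]$ under their \emph{current, still-learning} policies, so the effective reward $\mathbb{E}[R_t^i\mid o,a]$ drifts throughout training; it only becomes time-homogeneous once the neighbours' policies have converged, which is precisely what you are trying to prove. Your contraction-plus-Robbins--Monro argument therefore assumes its conclusion: the fixed point $Q_i^\ast$ of your operator $\mathcal{T}$ is only well defined relative to fixed neighbour behaviour. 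This circular coupling is exactly what the paper's joint analysis is designed to absorb --- the consensus term $-\beta_{t,o,a}\sum_{j\in\mathcal{N}_i}[\,\cdot\,]$ in Equation \ref{QDGR1} keeps all agents' iterates in one stochastic approximation, so no single agent ever needs a stationary reward in isolation. To salvage your route you would need either a two-time-scale argument (neighbours' policies frozen on the fast time scale) or an explicit fixed-point argument on the space of joint policies, neither of which is supplied by Theorem \ref{th1}. A secondary loose end: your closing claim that the $\tanh$ argument sits at its neutral point when neighbours are near-optimal conflates the threshold $c$ in Equation \ref{GR} with the neighbours' optimality condition of Definition \ref{df2}; the sum $\sum_j(R^j_{t+n}/r^j_t-c)$ vanishes when capacity ratios equal $c$, which is not the same event as each neighbour attaining $\rho_j^\ast$.
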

	\begin{proof} 
		As mentioned in Section II.B, our Q-value function shares a similar structure with $\mathcal{QD}$-learning. So the proof is inspired by the convergence analysis of $\mathcal{QD}$-learning. Since this article is not mainly about multi-agent theory, we would like to simply transform $\gamma$-Reward into the form of $\mathcal{QD}$-learning convergence proof. By following the rest of the proof process in $\mathcal{QD}$-learning, we can get the conclusion of convergence. Thus, we highly recommend readers to read the whole analysis in $\mathcal{QD}$-learning.  
			
		According to the \textit{Spectral Graph Theory}, the multi-agent communication network is simplified to an undirected graph $G=(V,E)$, where $V$ represents intersections and $E$ denotes the communication links. From the adjacency matrix $A$ ($A_{ij}$=1, if $(i,j)\in E$, $A_{ij}=0$, otherwise) and the degree matrix $D=\text{diag}\{d_1, \dots, d_N\}$ ($d_i = |\mathcal{N}_i|$), we can get a positive definite graph Laplacian matrix $L=D-A$, where eigenvalues ordered as $0=\lambda_1(L)\leq \lambda_2(L)\leq \cdots \leq \lambda_N(L)$. Let $\beta=-\alpha\gamma$ and $\beta \text {tanh} [ \sum_{j \in \mathcal{N}_i}(\frac{R^{j}_{t+n}}{r^{j}_t}-c)] = -\beta \text {tanh} [ \sum_{j \in \mathcal{N}_i}(c-\frac{R^{j}_{t+n}}{r^{j}_t})]$, we note that
		
		\begin{equation}\label{QDGR1}
		\begin{aligned}
			\boldsymbol{Q}_{\boldsymbol{o},\boldsymbol{a}}(t+1)&=(I_N-\beta L_t + \alpha I_N)\boldsymbol{r}_t \\&+\alpha (\mathcal{C}_{\boldsymbol{o},\boldsymbol{a}}(\boldsymbol{Q_t}) -\boldsymbol{Q}_{\boldsymbol{o},\boldsymbol{a}}(t) +\boldsymbol{\nu}_t)
		\end{aligned}
		\end{equation}
		
		where $\boldsymbol{Q}_{\boldsymbol{o},\boldsymbol{a}} = [Q_{o,a}^1(t),\dots,Q_{o,a}^N(t)]^T$ and $\mathcal{C}_{\boldsymbol{o}, \boldsymbol{a}}(\boldsymbol{Q_t})=[\mathcal{C}^1_{o,a}(Q_t^1), \dots, \mathcal{C}^N_{o,a}(Q_t^N)]^T $. The local $\gamma$-Reward operator $\mathcal{C}^i$ of agent $i$ is
		
		\begin{equation}\label{QDGR2}
		\begin{aligned}
		\mathcal{C}_{o, a}^{i}(Q)=\mathbb{E}(r^i_t)+\gamma' \sum_{j \in \mathcal{N}} p_{i, j}^{a} \max _{v \in \mathcal{A}} Q_{j, v}
		\end{aligned}
		\end{equation}
		
		The residual $\nu^i_t$ for each agent $i$ in Equation \ref{QDGR1} is
		
		\begin{equation}\label{QDGR3}
		\begin{aligned}
        \nu_t^i(o, a) = (1-\dfrac{1}{\alpha})r^i_t  +\gamma' \max _{v \in \mathcal{A}} Q_{{o}_{t+1}, v}^{i}(t)-\mathcal{C}^i_{o,a}\left({Q}\right) 
		\end{aligned}
		\end{equation}
		
		By following the proof in Section V of $\mathcal{QD}$-learning, we can get the same main result that for each agent $i$, we have
		
		\begin{equation}\label{QDGR4}
		\begin{aligned}
		\mathbb{P}(\lim_{t\rightarrow \infty}\boldsymbol{Q}^i_t=\boldsymbol{Q}^*)=1
		\end{aligned}
		\end{equation}
		
		We can conclude that \textit{spatial differentiation} formula can lead the \textit{$\gamma$-Reward} to  converge to local optimal. Obviously,  \textit{$\gamma $-Attention-Reward} has the same property.
		
	\end{proof}

	\subsection{Optimality of \textit{$\gamma $-Reward} model}
	For a stochastic game with multi-agent, the optimal point of the whole system is a Nash equilibrium point which is declared in Definition \ref{df3}. $ \nu^i(s|\pi_{*}^{1}, \ldots, \pi_{*}^{m}) $ can be interpreted as the discounted except reward $ \mathcal{G} $. According to previous two theorems, we can draw the following conclusions.
	\begin{theorem}
		With \textit{$\gamma $-Reward} model, the multi-agent system can converge to a Nash equilibrium point.
	\end{theorem}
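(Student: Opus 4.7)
The plan is to combine the two preceding theorems into a direct Nash-equilibrium statement via Definition \ref{df3}. Theorem \ref{th1} converts what would otherwise be a non-stationary multi-agent decision process into a well-defined stochastic game in which every agent $i$ faces a stationary per-agent transition kernel $\mathcal{P}_i$. The convergence theorem then guarantees that every agent's reward-value function $Q^i_t$ converges almost surely to a fixed point $Q^{i,*}$, and the greedy extraction in Assumption \ref{as3} yields a corresponding deterministic limiting policy $\pi_*^i$. The only remaining work is to identify the resulting tuple $(\pi_*^1,\ldots,\pi_*^m)$ with a Nash equilibrium in the sense of Definition \ref{df3}.

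First I would invoke the almost-sure limit to set $\pi_*^i(o^i) = \text{\textit{argmax}}_{a^i} Q^{i,*}(o^i,a^i)$ for every agent $i$. Next, using Assumption \ref{as3}, I would identify $\nu^i(s \mid \pi_*^1,\ldots,\pi_*^m)$ with the discounted expected amendatory return $\mathcal{G}^i$ evaluated along trajectories generated by the joint policy, so that it coincides up to the initial-state averaging with $Q^{i,*}(o^i,\pi_*^i(o^i))$. Theorem \ref{th1}, applied now with $\pi_*^{-i}$ fixed, tells us that agent $i$ is facing a stationary single-agent MDP; any alternative stationary policy $\pi^i \in \Pi^i$ therefore induces a value $\nu^i(s\mid \pi_*^1,\ldots,\pi^i,\pi_*^{i+1},\ldots,\pi_*^m)$ that, by the Bellman optimality satisfied by $Q^{i,*}$, cannot improve on the greedy choice. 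Translating this into the convention used in Definition \ref{df3} yields
\begin{equation}
\nu^i(s\mid \pi_*^1,\ldots,\pi_*^m) \leq \nu^i(s\mid \pi_*^1,\ldots,\pi^i,\pi_*^{i+1},\ldots,\pi_*^m)
\end{equation}
for every admissible deviation $\pi^i$, which is exactly the Nash equilibrium condition.

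The main obstacle I anticipate is the implicit coupling between agents hidden in the per-agent fixed point. Theorem \ref{th1} establishes stationarity only when all agents jointly adopt $\gamma$-Reward, and the limit $Q^{i,*}$ supplied by the convergence theorem is taken with respect to the kernel induced by $\pi_*^{-i}$; the delicate point is that the spatial-differentiation term $R^{j}_{t+n}/r^{j}_t$ in Equation \ref{GR} already internalizes the neighbours' behaviour through the replay buffer, so the local optimum obtained for agent $i$ is simultaneously a best response against the co-converged policies of the neighbourhood $\mathcal{N}_i$. Making this simultaneity rigorous — essentially showing that the family of per-agent Bellman fixed points is consistent as a joint fixed point across the network, so that no agent has a profitable unilateral deviation — is the one place where care is needed; once the coupled fixed point is identified with the tuple $(\pi_*^1,\ldots,\pi_*^m)$, the inequality in Definition \ref{df3} follows immediately from the per-agent optimality established above.
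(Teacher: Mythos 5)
Your proposal is correct at the same level of rigor as the paper's own proof sketch, and it uses the same two ingredients (the stationarity theorem and the convergence theorem), but it routes the final step differently. The paper's argument goes through Definition \ref{df2}: it identifies $\nu^i(s\mid\pi_*^1,\ldots,\pi_*^m)$ with the decomposed global optimum $\rho^*=\sum_i\rho_i^*$ and then asserts, qualitatively, that because the local optimum of $R^i$ depends on the neighbours' local optima through the spatial differentiation term, joint convergence of all local rewards already constitutes a Nash equilibrium. You instead run a standard best-response argument: fix $\pi_*^{-i}$, observe that agent $i$ then faces a stationary single-agent MDP, and invoke Bellman optimality of the limit $Q^{i,*}$ to rule out profitable unilateral deviations, which is closer to the classical Nash-Q style of proof and makes the per-agent optimality explicit rather than implicit in the sum decomposition. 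Your version buys a cleaner logical skeleton and, importantly, names the real gap that both arguments share --- that the family of per-agent fixed points must be mutually consistent as a joint fixed point, since each $Q^{i,*}$ is defined relative to the kernel induced by the others' limiting policies --- whereas the paper papers over this with the sentence about agents ``caring about each other.'' The paper's version buys only the explicit link to Definition \ref{df2}, i.e.\ the identification $\nu^i=\rho^*=\sum_i\rho_i^*$, which you could add in one line. One small point to keep in mind: the inequality in Definition \ref{df3} is written with $\leq$ rather than the conventional $\geq$ for a Nash equilibrium; you correctly follow the paper's stated convention, but if you were writing this independently you should flag that the deviation should not \emph{improve} the deviating agent's value.
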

	\begin{proof} 
		First we give the definition of optimal $ \nu^* $ with Definition \ref{df2}
		\begin{equation}
		\nu^i(s|\pi_{*}^{1}, \ldots, \pi_{*}^{m})=\rho^{*}=\sum_{i=1}^{m} \rho_{i}^{*}
		\end{equation}
		
		From the process of convergence proof, we can figure that the local optimal reward of $ R^i $ depends on the local optimal of the other agents $ \rho_j^* $. In other words, if there is an agent which does not converge to local optimal reward, the other will also not be optimal. From this, we can conclude that  \textit{$\gamma $-Reward} forces agents to care about the others and let the whole system finally converge to a Nash equilibrium point.
	\end{proof}
	
	\section{Experiment}
	We conduct experiments on Cityflow, an open-source traffic simulator that supports large-scale traffic signal control\cite{Zhang2019CityFlow}, rather than the common used SUMO simulator\cite{Krajzewicz2010Traffic}, since it is more than twenty times faster than SUMO. Moreover, we use Ray\cite{Moritz2018} framework, which is an open-source library for reinforcement learning that offers both high scalability and a unified API for a variety of applications for DRL algorithms.
	
	\subsection{Datasets}
	In the experiment, we use both synthetic data and real-world data. We share the same real-world dataset with Colight for convenience. The datasets mainly include two parts, roadnet and flow. Roadnet describes the number of intersections in the road network, the coordinates, and the number of lanes owned by each intersection. Flow is based on vehicles and lists thousands of vehicles, each vehicle has its own property, such as length, width, max of accuracy, max of speed and, most importantly, trajectory. The experiment used the real-world data of Hangzhou, Jinan in China, and also New York in the USA. Meanwhile, we used two kinds of synthetic data, arterial and grid type. 
	We counted their characteristics and presented them in Table \uppercase\expandafter{\romannumeral1}. $Grid_{3\times 3}uni$ is one-way traffic, and $Grid_{3\times 3 }bi$ is two-way with the same road network.
	\begin{table} 
		\centering
		\caption{Situation of Datasets} 
		\begin{threeparttable}
			\begin{tabular}{cccccc}
				\toprule
				\multirow{2}{2cm}{DataSet}
				& \multirow{2}{2cm}{Intersections}&  \multicolumn{4}{c}{Arrival rate (vehicles/300s)}\\ &&Mean& Std& Max& Min\\
				\midrule
				$ Arterial_{1\times 6}\tnote{$ \dagger $} $ & 6& 300& -&  -&  -\\
				$ Grid_{3\times 3}uni\tnote{$ \dagger $} $ & 9& 300& -&  -&  -\\
				$ Grid_{3\times 3}bi\tnote{$ \dagger $} $ & 9& 300& -& -&-\\
				$ NewYork_{16\times 3} $ & 48& 240.79& 10.08& 274& 216 \\
				$ Jinan_{3\times 4} $ & 12& 526.63& 86.70& 676&  256\\
				$ Hangzhou_{4\times 4} $& 16& 250.79& 38.21& 335& 208 \\
				\bottomrule[1pt]
			\end{tabular}
			\begin{tablenotes}
				\footnotesize
				\item[$ \dagger $] Traffic flow from synthetic data are uniform, so there is no need to count another three values.
			\end{tablenotes}
		\end{threeparttable}
	\end{table}
	\begin{figure*}[htbp]
		\centerline{\includegraphics[width=1\linewidth]{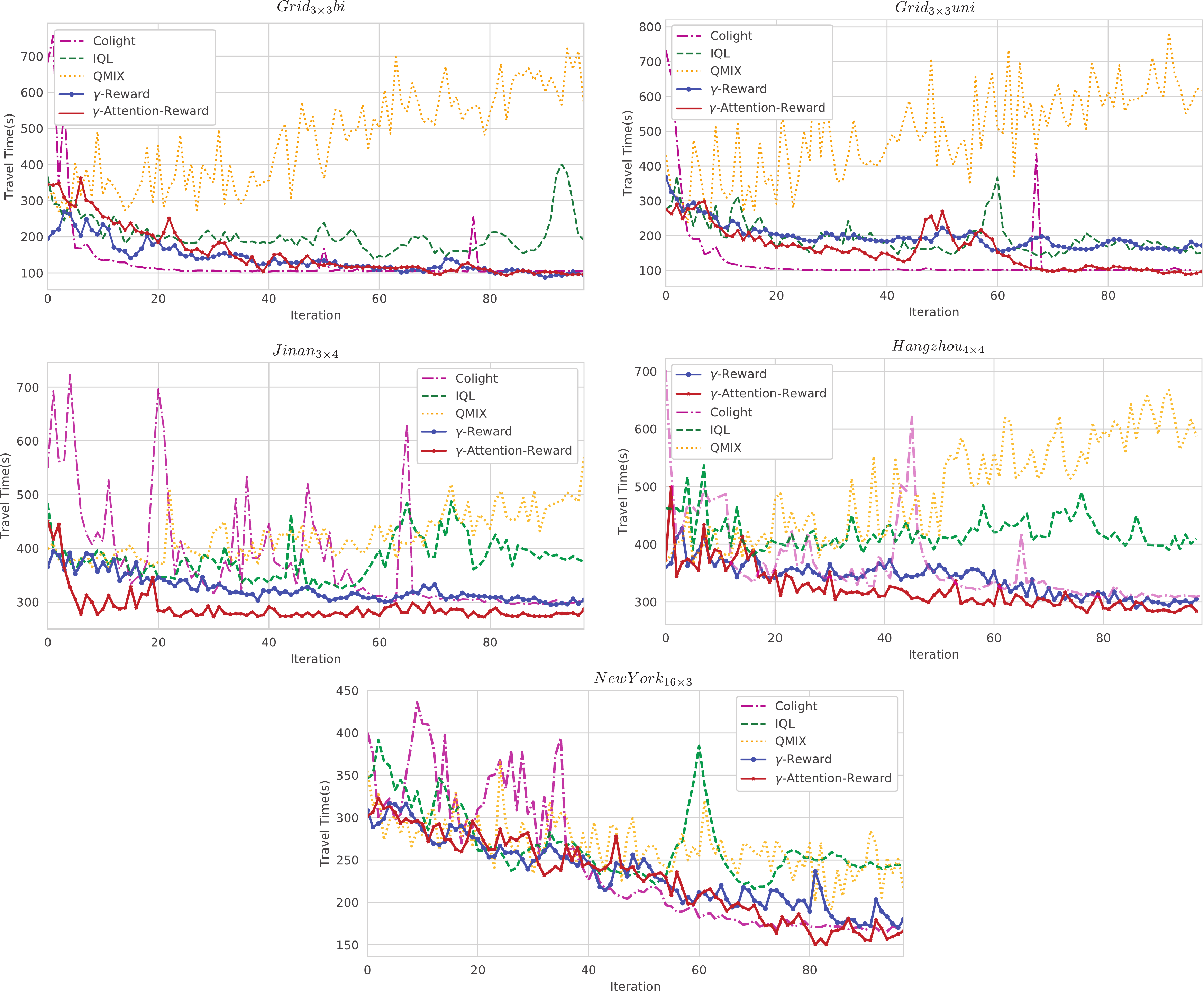}}
		\caption{Evaluation Performance Compared among Baselines, proposed \textit{$ \gamma $-Reward} and \textit{$ \gamma $-Attention-Reward}}
		\label{final_results}
	\end{figure*}
	\subsection{Baseline Methods}
	In Chapter \uppercase\expandafter{\romannumeral2}, we have already introduced methods for TSC, including traditional rule-based methods and learning-based methods. The most primitive rule-based methods are still the most common methods nowadays. 
	As a mature rule-based method, Max-Pressure can be used as a representative.
	
	Learning-based methods have been prosperous under the development of deep learning and data science in recent years. They are characterized by the use of large-scale data to approximate optimal strategies through iterative learning. 
	We have chosen several methods as the baseline:
	\begin{itemize}
		\item \textit{IQL}: Since \textit{$ \gamma $-Reward} is based on the decoupling idea of IQL, and introduces a coordination mechanism so that it can demonstrate the impact of coordination mechanism compared to IQL. Here we use original D3QN, like \cite{liang2019deep}, for comparison.
		\item \textit{QMIX}: This is a sophisticated MARL algorithm, which integrates all agents into the same model and concentrates on the learning of joint action reward functions. 
		As a typical one-model method, comparing it with \textit{$ \gamma $-Reward} can effectively observe the advantages and disadvantages of joint learning and independent learning for coordination.
		\item \textit{Colight}: Unlike \textit{$ \gamma $-Reward}, Colight is more like QMIX, but not learns a joint action. It uses Attention layers to train the surrounding observation code to replace the observation of the current intersection. Due to its full collection of observation, it can apply Multi-head Attention. By this way, the coordination between agents is realized. 
		\textit{$ \gamma $-Attention-Reward} has made some improvements on this basis. The method of Replay Buffer Amendment is employed to introduce the effect of the current intersection on the surrounding intersection, replacing the hyperparameter $\left|\mathcal{N}_{i}\right|$ in Colight and adding consideration of the impact of actions on both time and space.
	\end{itemize}
	
	\subsection{Evaluation Performance}
	Figure \ref{final_results} and Table \uppercase\expandafter{\romannumeral2} shows the performance comparison between \textit{$ \gamma $-Reward} and the more comprehensive \textit{$ \gamma $-Attention-Reward} and baselines. 
	Each model has trained 100 iterations, and each iteration run 3600 time steps in the simulator. Each action in them lasts at least 10 seconds for avoiding rapid switching phase impracticably. Delay span $n=10s$ and threshold $c=0.8$.
	
	We use the average transit time of the vehicle to evaluate the performance of the model, which is the standard evaluation method in the field of TSC.
	
	The performance of learning-based methods is significantly better than rule-based  Max-Pressure (Table \uppercase\expandafter{\romannumeral2}), which is widely proved in many researches.
	
	Among the independent learning DRL model, the performance of \textit{$ \gamma $-Reward}  series far exceeds the IQL model. This demonstrates the importance of coordination between
	agents for global performance improvement. 
	
	It is worth noting that, in all road network, the independent learning DRL model shows a stronger astringency than one-model QMIX. That is probably because for a single model, excessive dimensions can make the policy more difficult to learn. However, Colight does not show divergence while achieved excellent results. This may benefit from that it does not make joint decisions through the joint action function, but by sharing network parameters, so that all agents generate independent actions. Since the agents share the model parameters, they will undoubtedly ignore individual differences and sacrifice some performance. Compared to the proposed model, intense oscillations sometimes occur in Colight during training. This is also a result of sharing parameters. Once the model iterating in the wrong direction, it will mislead all agents and lead to horrible congestion in the whole road network. The performance of proposed methods is even better than Colight, which means sharing sensation is not the only way to realize coordination. Sharing results can also help to focus on the whole road network for a single intersection.
	
	In real-world road networks, \textit{$ \gamma $-Reward} and \textit{$ \gamma $-Attention-Reward} do not show gigantic difference. The reason is that all real-world road networks we used are two-way road. We will introduce the study about the attention score later and can be shown in Figure \ref{hzas} and Figure \ref{attention_score}. It has shown its effect in specifically synthetic road networks. Compared $ Grid_{6\times 6}bi $ and $ Grid_{6\times 6}uni $ in Figure \ref{final_results}, Attention layers distinguish one-way and two-way, and assist agents to achieve a better performance. This will also describe later by revealing the detail of Attention layers.
	
	\begin{table*} 
		\centering
		\caption{Performance Comparison (Average Travel Time)}  
		\begin{tabular}{cccccc}
			\toprule
			Model& $ Grid_{3\times 3}bi $& $ Grid_{3\times 3}uni $& $ NewYork_{16\times 3} $& $ Jinan_{3\times 4} $& $ Hangzhou_{4\times 4} $\\
			\midrule[1pt]
			Max-Pressure & 204.72& 186.06& 405.69& 359.81& 431.53 \\
			\midrule[1pt]
			IQL& 191.05& 157.51& 248.46& 371.74&406.27  \\
			QMIX& 565.70& 619.32& 216.56& 571.78& 587.46 \\
			Colight& 104.89& 100.96&169.66 & 301.78& 311.15 \\
			\midrule[1pt]
			$ \gamma $-Reward& 96.44 & 175.38& 162.18& 303.97& 304.90  \\
			$ \gamma $-Attention-Reward & 96.14& 93.93& 141.16& 286.27& 284.24  \\
			\bottomrule[1pt]
		\end{tabular}
	\end{table*}
	
	\subsection{Study of hyperparameter $\gamma $}
	
	\begin{figure}[htbp]
		\centerline{\includegraphics[width=1\linewidth]{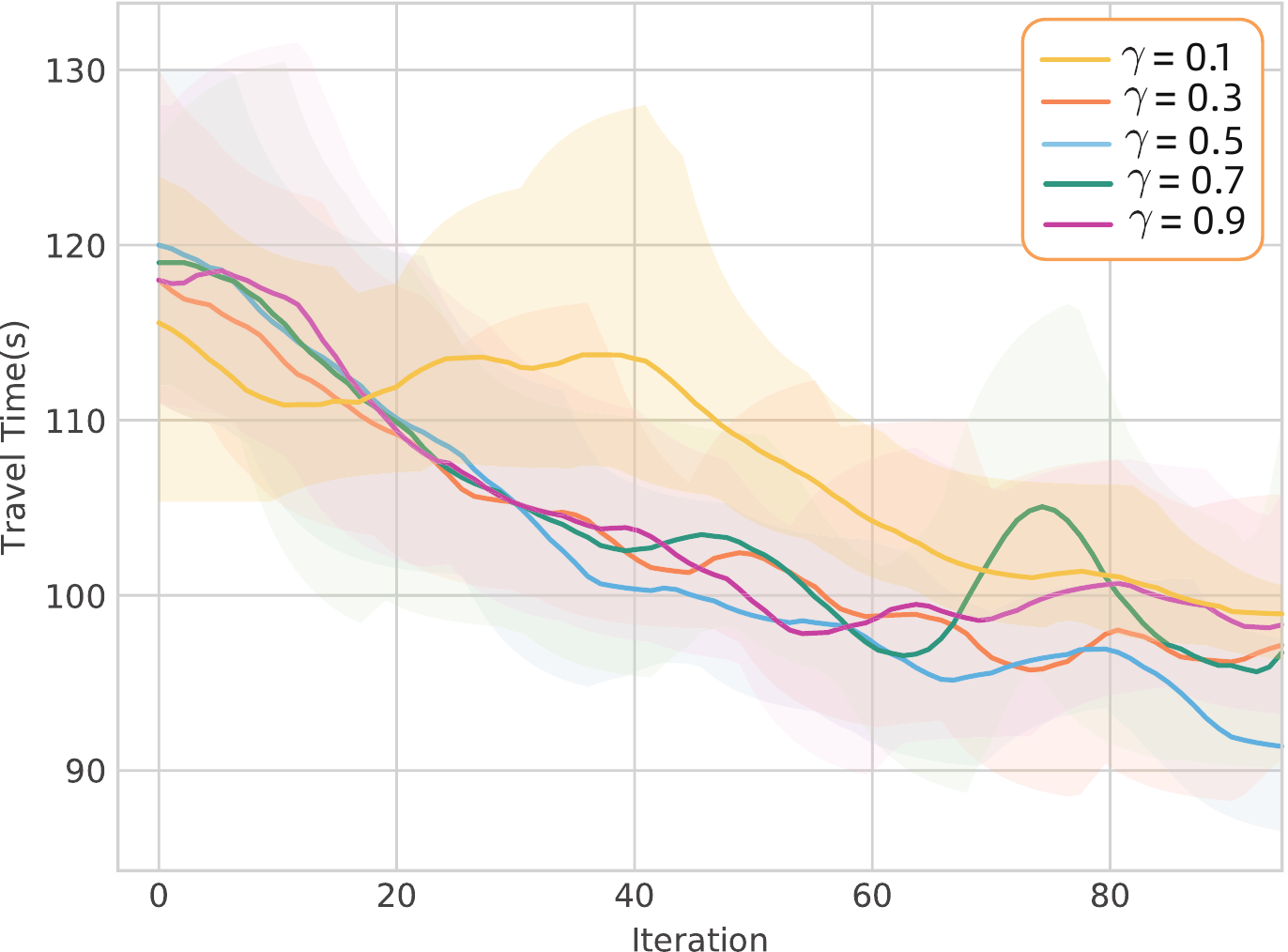}}
		\caption{Study of hyperparameter $\gamma $; dark lines represent results after smoothing, light lines represent the deviation of raw results.}
		\label{gamma_study}
	\end{figure}
	We use  $Arterial_{1\times 6}$ road network to study the impact of different $\gamma$ value. 
	We have chosen [0.1, 0.3, 0.5, 0.7, 0.9] five $ \gamma$ values and compared the results. 
	As shown in Figure \ref{gamma_study}, 0.5 may be a balance point of the penalty item.
	So for the hyperparameter $\gamma$, we all set it to 0.5.
	
	\subsection{Visualization of Proposed method}
	The core idea of the \textit{$ \gamma $-Reward} algorithm is to correct the reward in the replay buffer. 
	In this section, we use the $Arterial_{1\times 6}$ road network as an example to show how the reward values between different intersections affect each other. 
	Compare the $Grid_{3\times 3}uni$ and $Grid_{3\times 3 }bi$ road networks to demonstrate the role of the attention mechanism in the \textit{$\gamma$-Reward} algorithm improvement.
	\begin{figure}[htbp]
		\centerline{\includegraphics[width=1\linewidth]{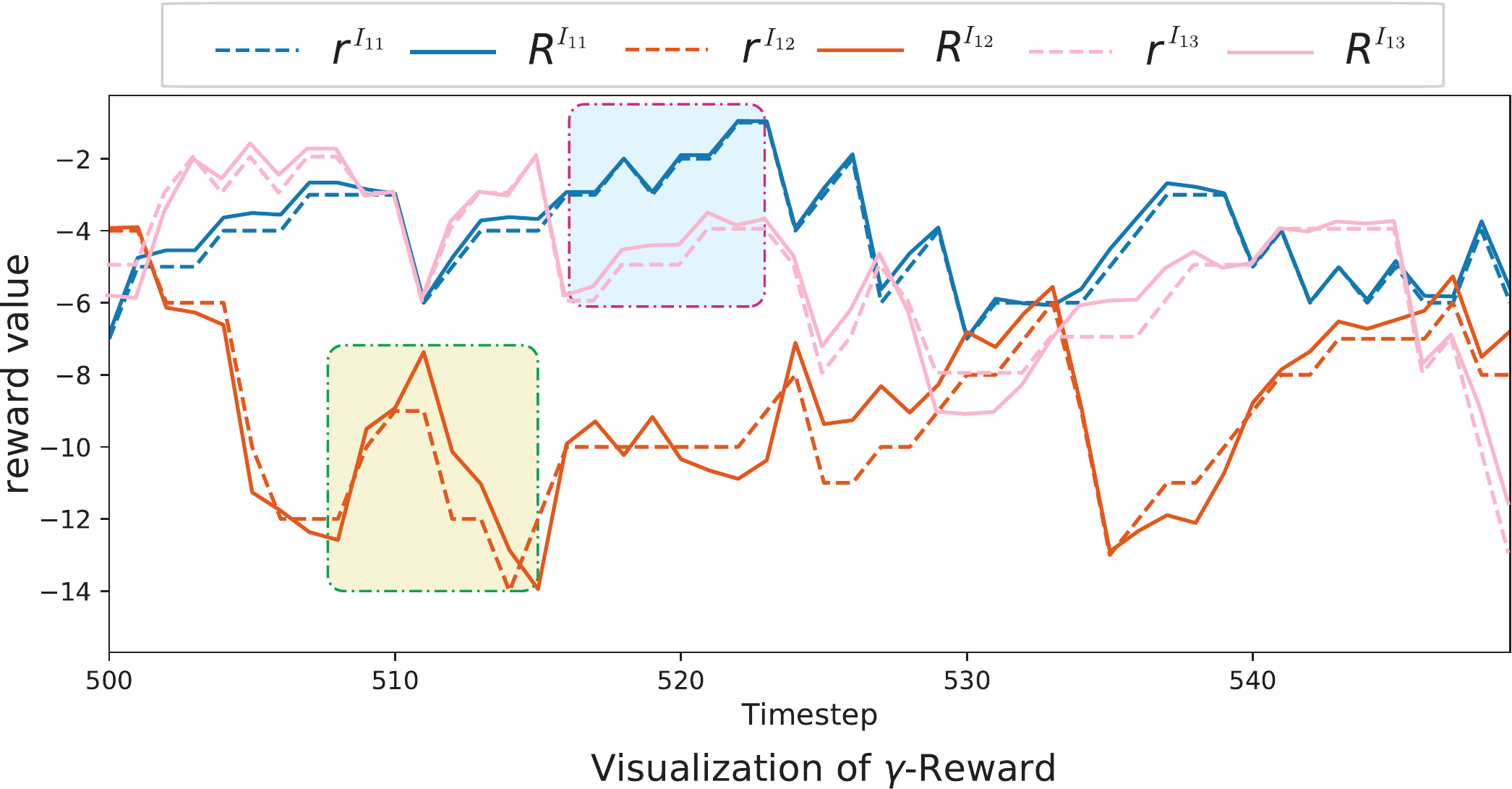}}
		\caption{Visualization of the reward changing by \textit{spatial differentiation}; $ r^i $ represents raw reward from D3QN, $ R^i $ represents the reward after amendment.}
		\label{reward_change}
	\end{figure}
	\subsubsection{Visualization of Spatial Differentiation Formula}
	In Figure \ref{reward_change}, dashed lines represent the original reward, and solid lines represent the corrected reward. 
	Since the linear road network is selected, there are no more than two adjacent intersections, so it is easier to observe the influence between the intersections. We just select the first two intersections for clearness.
	
	Observe the red line in the light yellow area, which represents the reward for the second intersection. It can be found that the solid line in this area is almost above the dashed line, meaning that after amendment, rewards become better. The reason is shown in the light blue area, dark blue and pink lines are getting a raise, which means the traffic situation is getting better both in intersection $I_{11}$ and $I_{13}$. We believe that in the process of getting better at these two intersections, some of the efforts are contributed by intersection $I_{12}$. The lag between yellow and blue area is up to the delay span $ n $.
	
	From Figure \ref{reward_change}, we can see that in the actual training, the \textit{$\gamma$-Reward} framework, as what we expected, introduces future changes in nearby intersections.
	\subsubsection{Effect of Attention Mechanism}
	\begin{figure}[htbp]
		\centerline{\includegraphics[width=0.8\linewidth]{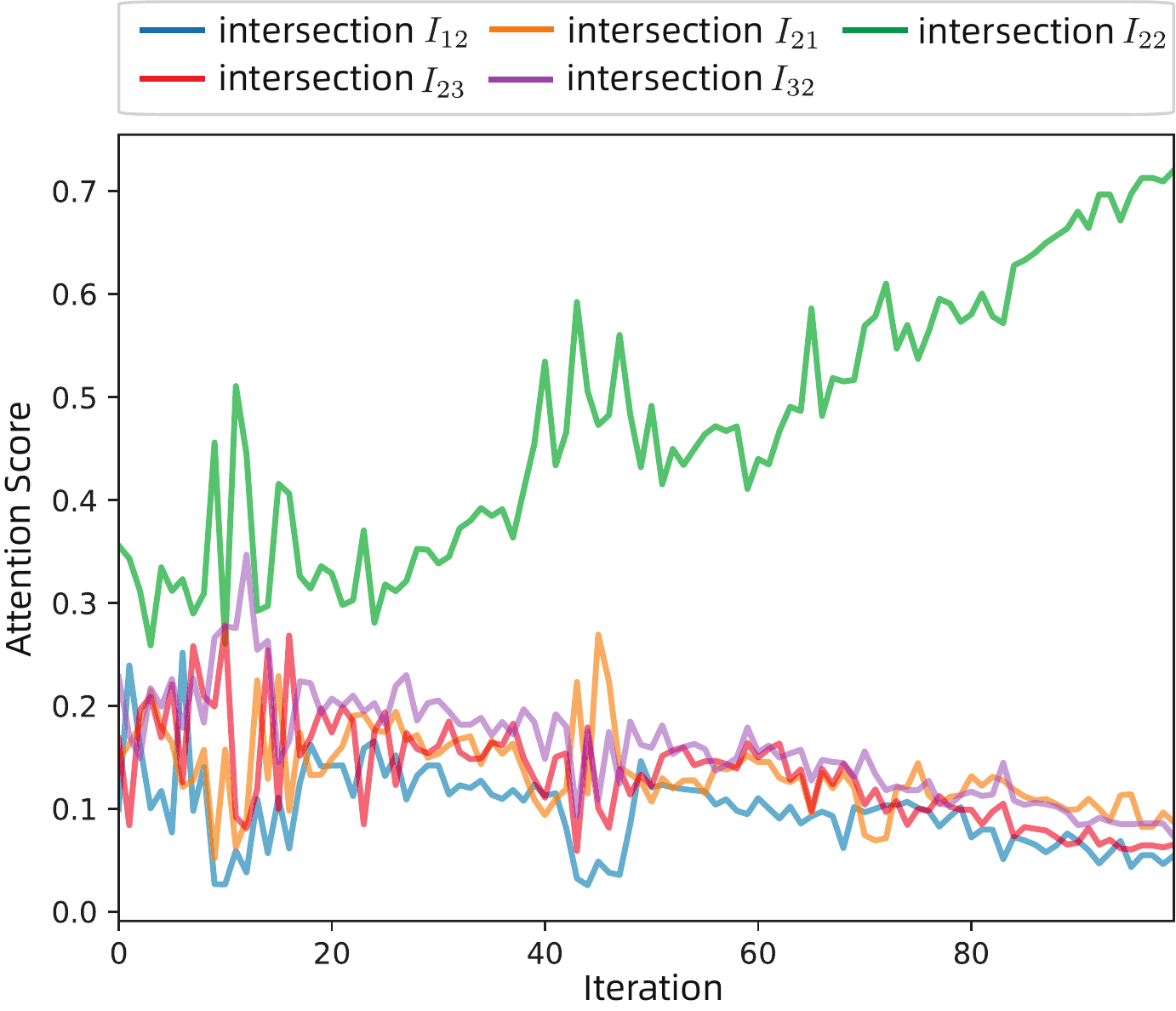}}
		\caption{Attention score of intersection $I_{22}$ selected from $ Hangzhou_{4\times 4} $ road network}
		\label{hzas}
	\end{figure}
	
	Figure \ref{hzas} shows attention scores from Hangzhou road network. We can find that except current intersection $ I_{22} $, others are declined and tending to the same value. Therefore, Attention does not play an important role in these real-world datasets. The reason could be that all of them are two-way road and have the same number of lanes. To highlight its effect, we need to compare the situation between one-way and two-way traffic in the same road network. Thus, we use $Grid_{3\times 3}uni$ and $Grid_{3\times 3 }bi$ for visualization.
	
	\begin{figure}[htbp]
		\centering
		\subfigure[Attention scores of intersection $I_{22}$ selected from $ Grid_{3\times 3}bi $]{
			\begin{minipage}[b]{0.8\linewidth}
				\centerline{\includegraphics[width=1\linewidth]{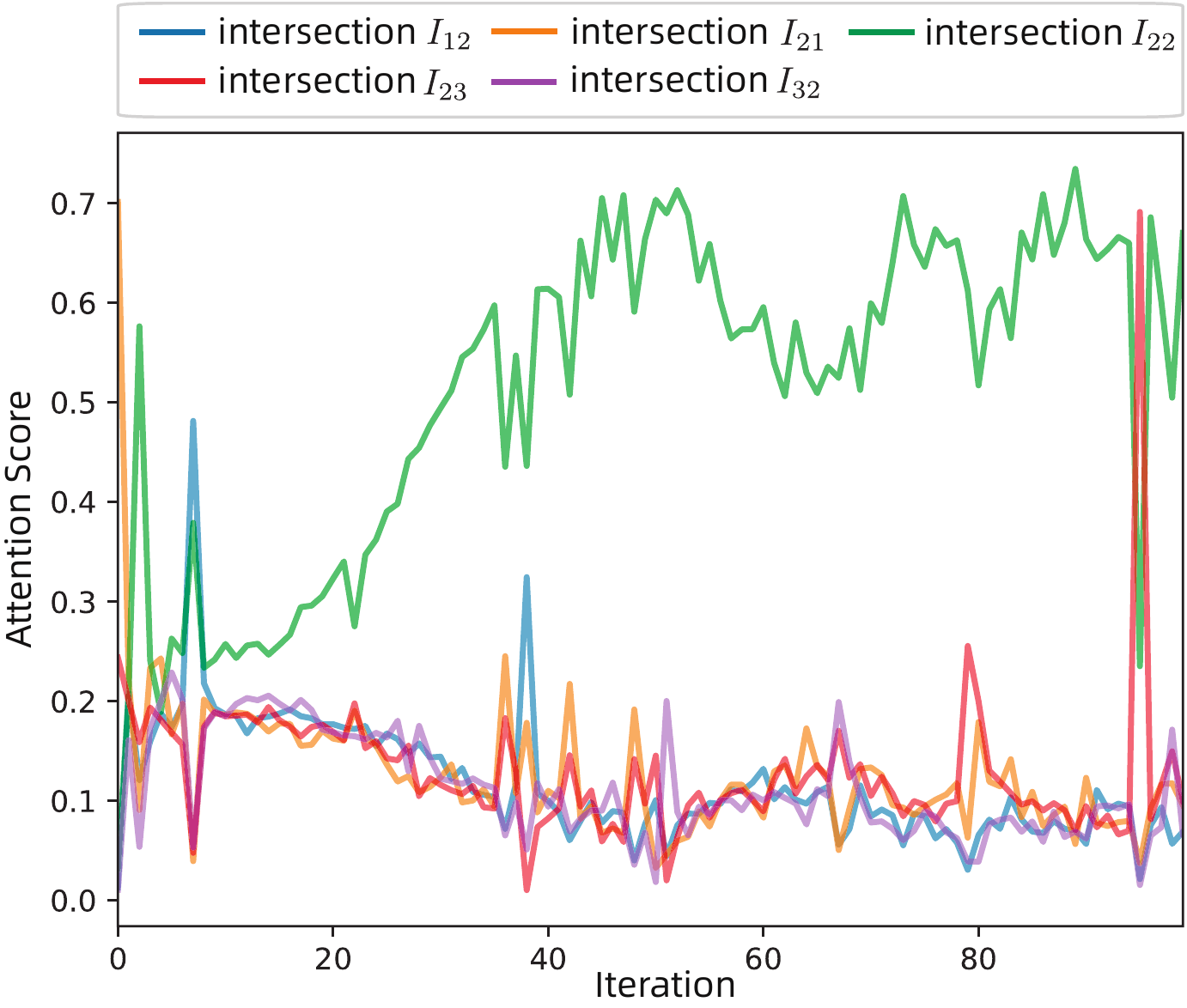}}
			\end{minipage}    
		}
		\subfigure[Attention scores of intersection $I_{22}$ selected from $ Grid_{3\times 3}uni $]{
			\begin{minipage}[b]{0.8\linewidth}
				\centerline{\includegraphics[width=1\linewidth]{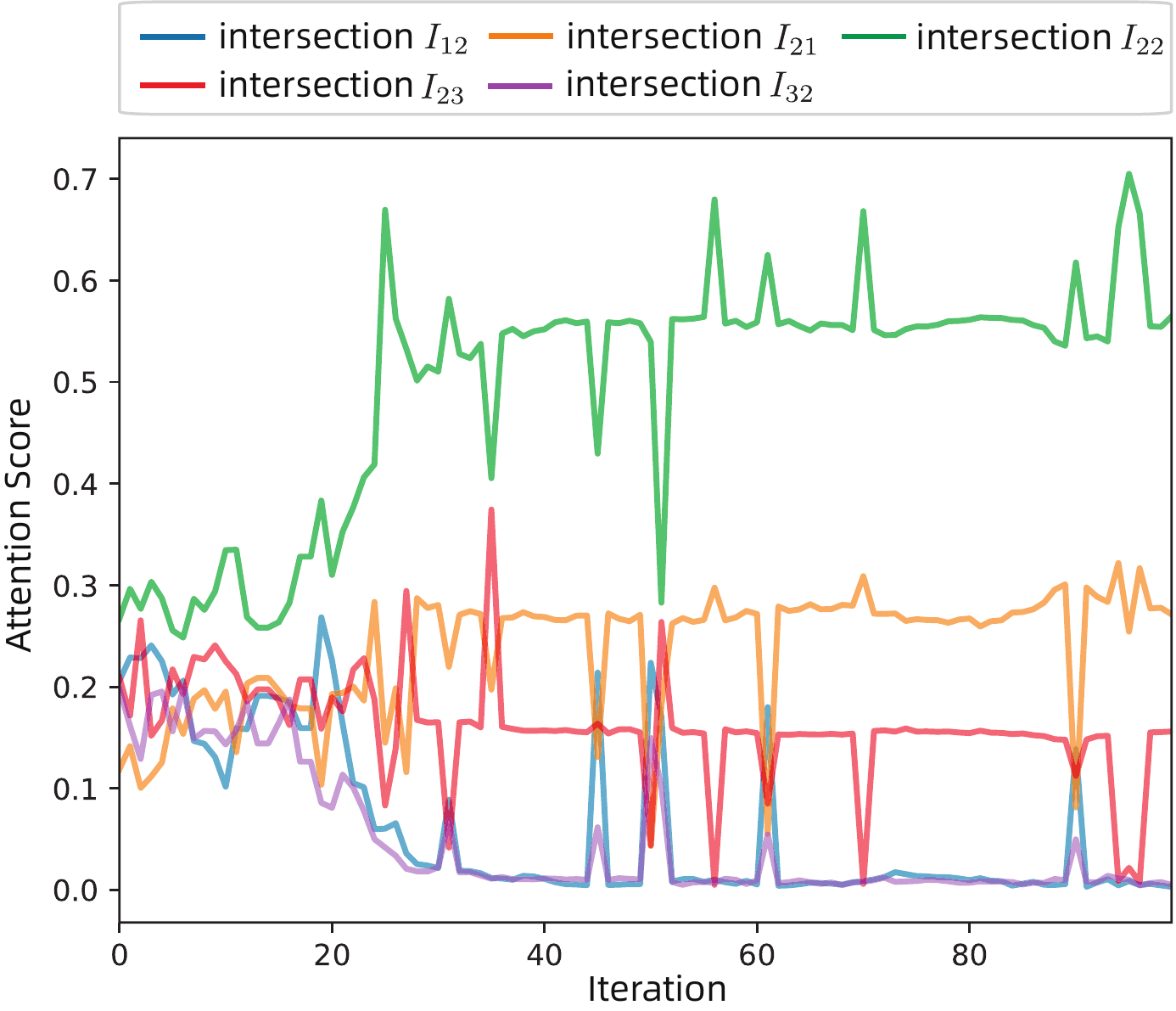}}
			\end{minipage}
		}
		\caption{Attention layers learn the different of surrounding from traffic flow; $ Grid_{3\times 3}bi $ and $ Grid_{3\times 3}uni $ road network are used for comparison to show the effect of attention layers.}
		\label{attention_score}
	\end{figure}
	
	Figure \ref{attention_score} shows the comparison between one-way and two-way $ 3\times 3 $ network.
	Comparing Figure \ref{attention_score}(a) with Figure \ref{attention_score}(b), the score change of intersection $ I_{22} $ in Figure \ref{attention_score}(a) is the equalization of surrounding intersection $ I_{12} $, $ I_{21} $, $ I_{23} $ and $ I_{32} $. While in Figure \ref{attention_score}(b), the intersection from the direction of exiting is obviously holding a commanding edge. 
	This means that the attention mechanism does have a significant effect in understanding the structure of the road network. 
	With the attention mechanism, the reward amendment is more concerned with the results of its actions, which is crucial in the revision of reward.

	\section{Discussion about Real World Application}
	\begin{figure}[htbp]
		\centerline{\includegraphics[width=1\linewidth]{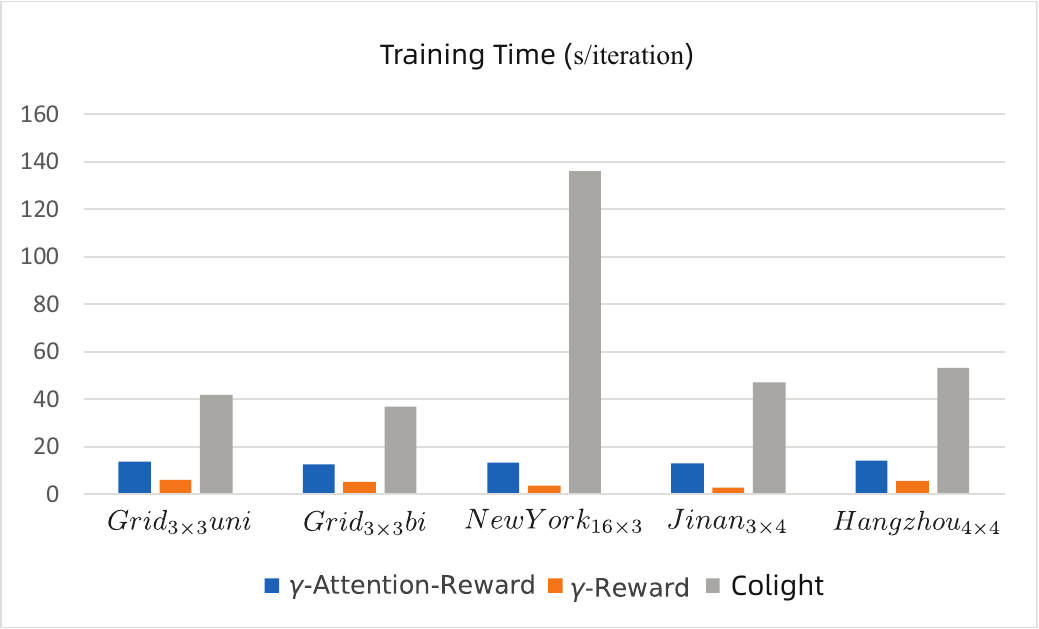}}
		\caption{Training time of \textit{$ \gamma $-Reward} series and Colight; Note that in order to show the difference between centralized and decentralized algorithms, we illustrate training time of one agent in \textit{$ \gamma $-Reward} series.}
		\label{training time}
	\end{figure}
	TSC is an actual task running without the simulator, and the proposed control plan should aim at solving practical problems. Therefore, it is necessary to take the limitations that may exist in practice into account. 
	
	\subsection{Scalability}
	Scalability is an important standard to evaluate whether a TSC method is meaningful. The proposed methods in this paper are both decentralized, each D3QN agent can be deployed on the embedding device in each intersection respectively. Either the training time or the inference time of our methods is not related to the number of intersections. Besides, the message transmission among neighboring intersections just takes tolerable time and all of the intersections communicate in parallel. Due to the limitation of computing capacity, experiments of larger scale are not able to conduct. Instead, we analyze the scalability by collecting the training time on datasets with different scales which is listed in Figure \ref{training time}. As the number of intersections increased, the time cost of Colight is significantly larger. In contrast, our methods remain a shorter training time due to the parallel computation. Another advantage is when the road network structure is changed, we only need to train the newly added intersections separately and there is a tiny impact on existing intersections. Therefore, the proposed methods are believed to be scalable.
	 
	\subsection{Real-time}
	Real-time traffic communication may have problems like communication delay, information security, and risk of packet loss\cite{ezell2013cyber}. Therefore, it is necessary to decouple the calculation of the whole road network. Decentralization is becoming the trend of solutions in TSC problem\cite{liu2017distributed}\cite{chu2016large}. Colight, while using Multi-head Attention technology and sharing parameter, summarizes the relationship of the global road network and reduces the time complexity and space complexity of training. 
	However, if we apply it to the practice, the global road network information in real-time is first needed to transmit to the central server for calculation, and then the server needs to dispense the resulting actions to each intersection. Note that it is not just transmitting actions of each traffic signal, but also their sensor observations! In contrast, communication in our methods occurs in a small neighborhood, and the messages they need are only several bits. These attributes help the intersections keep in touch in real-time.

	\subsection{Further applications}
	The coordination mechanism in this paper can be expended to other multi-agent tasks which face the problem of scalability and real-time.  For instance, the power dispatching of power networks can be described as the same as the TSC problem. Power transmission is similar to the traffic flow between two neighboring intersections. Furthermore, the control of a multi-joint tandem robot may be regarded as a multi-agent task. Actions from the joint which nears the base have a consequence on other joints and the end-effecter. We can use the coordination mechanism to consider this effect on neighboring agents or joints and achieve a global optimal performance. There are already broad applications using this kind of coordination mechanism in engineering systems like sensor networks\cite{rabbat2004distributed}, smart grid\cite{zhang2018dynamic} and robotics\cite{corke2005networked}.
	
	\section{Conclusion}
	In this paper, we propose the \textit{$\gamma$-Reward} method and its variant \textit{$\gamma$-Attention-Reward} that introduces the attention mechanism to solve the problem of intelligent control of the traffic signal. By extending the Markov Chain to the space-time domain, the methods turn to be a scalable solution for TSC. Specially, we give a concise proof of the \textit{spatial differentiation} formula which shows that the two frameworks can converge to Nash equilibrium. We conduct extensive experiments using both real-world and synthetic data. They confirm that our proposed methods have a superior performance over state-of-the-art methods. In the asymmetry road network, \textit{$\gamma$-Attention-Reward} shows inspired results than \textit{$\gamma$-Reward} by adding the attention mechanism. Moreover, we investigate the effect of the reward amendment and attention mechanism in achieving coordination thoroughly.
	Compared to the recently proposed Colight, \textit{$\gamma$-Reward} series replaces the graph attention with recursion, decouples the calculation of the whole road networks, and is more suitable for practical applications.

	\section*{Acknowledgment}
	
	This project is derived from the 46th project of Deecamp 2019, which won the Best Technology Award, and we thanks for the effort of our teammates in “Multi-Commander”\footnote{Our project of Deecamp 2019 is open-source in Github: https://github.com/multi-commander/Multi-Commander}.
	Thanks to the practice platform provided by Dr. Kai-Fu Lee and the support of APEX Lab of Shanghai Jiaotong University. We also gratefully acknowledge the support from the National Natural Science Foundation of China (Grant No. 61973210), the projects of SJTU (Grant Nos. YG2019ZDA17; ZH2018QNB23).
	
	\bibliographystyle{ieeetr}
	\bibliography{ref}

\begin{thebibliography}{10}

\bibitem{to2001white}
H.~R. TO and M.~M. Barker, ``White paper european transport policy for 2010:
  time to decide,'' 2001.

\bibitem{Liang}
X.~Liang, X.~Du, G.~Wang, and Z.~Han, ``Deep reinforcement learning for traffic
  light control in vehicular networks,'' {\em arXiv preprint arXiv:1803.11115},
  2018.

\bibitem{wei2018intellilight}
H.~Wei, G.~Zheng, H.~Yao, and Z.~Li, ``Intellilight: A reinforcement learning
  approach for intelligent traffic light control,'' in {\em Proceedings of the
  24th ACM SIGKDD International Conference on Knowledge Discovery \& Data
  Mining}, pp.~2496--2505, ACM, 2018.

\bibitem{DBLP:journals/corr/abs-1904-08117}
H.~Wei, G.~Zheng, V.~V. Gayah, and Z.~Li, ``A survey on traffic signal control
  methods,'' {\em CoRR}, vol.~abs/1904.08117, 2019.

\bibitem{Varaiya2013The}
P.~Varaiya, ``The max-pressure controller for arbitrary networks of signalized
  intersections,'' in {\em Advances in Dynamic Network Modeling in Complex
  Transportation Systems}, pp.~27--66, Springer, 2013.

\bibitem{bucsoniu2010multi}
L.~Bu{\c{s}}oniu, R.~Babu{\v{s}}ka, and B.~De~Schutter, ``Multi-agent
  reinforcement learning: An overview,'' in {\em Innovations in multi-agent
  systems and applications-1}, pp.~183--221, Springer, 2010.

\bibitem{zhang2019multi}
K.~Zhang, Z.~Yang, and T.~Ba{\c{s}}ar, ``Multi-agent reinforcement learning: A
  selective overview of theories and algorithms,'' {\em arXiv preprint
  arXiv:1911.10635}, 2019.

\bibitem{Wei2019}
H.~Wei, N.~Xu, H.~Zhang, G.~Zheng, X.~Zang, C.~Chen, W.~Zhang, Y.~Zhu, K.~Xu,
  and Z.~Li, ``Colight: Learning network-level cooperation for traffic signal
  control,'' in {\em Proceedings of the 28th ACM International Conference on
  Information and Knowledge Management}, pp.~1913--1922, 2019.

\bibitem{koonce2008traffic}
P.~Koonce and L.~Rodegerdts, ``Traffic signal timing manual.,'' tech. rep.,
  United States. Federal Highway Administration, 2008.

\bibitem{T1996The}
J.~T{\"o}r{\"o}k and J.~Kert{\'e}sz, ``The green wave model of two-dimensional
  traffic: Transitions in the flow properties and in the geometry of the
  traffic jam,'' {\em Physica A: Statistical Mechanics and its Applications},
  vol.~231, no.~4, pp.~515--533, 1996.

\bibitem{mousavi2017traffic}
S.~S. Mousavi, M.~Schukat, and E.~Howley, ``Traffic light control using deep
  policy-gradient and value-function-based reinforcement learning,'' {\em IET
  Intelligent Transport Systems}, vol.~11, no.~7, pp.~417--423, 2017.

\bibitem{xiong2019learning}
Y.~Xiong, G.~Zheng, K.~Xu, and Z.~Li, ``Learning traffic signal control from
  demonstrations,'' in {\em Proceedings of the 28th ACM International
  Conference on Information and Knowledge Management}, pp.~2289--2292, 2019.

\bibitem{zheng2019learning}
G.~Zheng, Y.~Xiong, X.~Zang, J.~Feng, H.~Wei, H.~Zhang, Y.~Li, K.~Xu, and
  Z.~Li, ``Learning phase competition for traffic signal control,'' in {\em
  Proceedings of the 28th ACM International Conference on Information and
  Knowledge Management}, pp.~1963--1972, 2019.

\bibitem{foerster2016learning}
J.~Foerster, I.~A. Assael, N.~de~Freitas, and S.~Whiteson, ``Learning to
  communicate with deep multi-agent reinforcement learning,'' in {\em Advances
  in Neural Information Processing Systems}, pp.~2137--2145, 2016.

\bibitem{Tan}
M.~Tan, ``Multi-agent reinforcement learning: Independent vs. cooperative
  agents,'' in {\em Proceedings of the tenth international conference on
  machine learning}, pp.~330--337, 1993.

\bibitem{rashid2018qmix}
T.~Rashid, M.~Samvelyan, C.~Schroeder, G.~Farquhar, J.~Foerster, and
  S.~Whiteson, ``Qmix: Monotonic value function factorisation for deep
  multi-agent reinforcement learning,'' in {\em International Conference on
  Machine Learning}, pp.~4295--4304, 2018.

\bibitem{van2016coordinated}
E.~Van~der Pol and F.~A. Oliehoek, ``Coordinated deep reinforcement learners
  for traffic light control,'' {\em Proceedings of Learning, Inference and
  Control of Multi-Agent Systems (at NIPS 2016)}, 2016.

\bibitem{nishi2018traffic}
T.~Nishi, K.~Otaki, K.~Hayakawa, and T.~Yoshimura, ``Traffic signal control
  based on reinforcement learning with graph convolutional neural nets,'' in
  {\em 2018 21st International Conference on Intelligent Transportation Systems
  (ITSC)}, pp.~877--883, IEEE, 2018.

\bibitem{wei2019presslight}
H.~Wei, C.~Chen, G.~Zheng, K.~Wu, V.~Gayah, K.~Xu, and Z.~Li, ``Presslight:
  Learning max pressure control to coordinate traffic signals in arterial
  network,'' in {\em Proceedings of the 25th ACM SIGKDD International
  Conference on Knowledge Discovery \& Data Mining}, pp.~1290--1298, 2019.

\bibitem{wiki:xxx}
{Wikipedia contributors}, ``Lane --- {Wikipedia}{,} the free encyclopedia,''
  2019.
\newblock [Online; accessed 1-November-2019].

\bibitem{stevanovic2010adaptive}
A.~Stevanovic, {\em Adaptive traffic control systems: domestic and foreign
  state of practice}.
\newblock No.~Project 20-5 (Topic 40-03), 2010.

\bibitem{mnih2015human}
V.~Mnih, K.~Kavukcuoglu, D.~Silver, A.~A. Rusu, J.~Veness, M.~G. Bellemare,
  A.~Graves, M.~Riedmiller, A.~K. Fidjeland, G.~Ostrovski, {\em et~al.},
  ``Human-level control through deep reinforcement learning,'' {\em Nature},
  vol.~518, no.~7540, p.~529, 2015.

\bibitem{VanHasselt2016}
H.~Van~Hasselt, A.~Guez, and D.~Silver, ``Deep reinforcement learning with
  double q-learning,'' in {\em Thirtieth AAAI conference on artificial
  intelligence}, 2016.

\bibitem{Wang2016}
Z.~Wang, T.~Schaul, M.~Hessel, H.~Hasselt, M.~Lanctot, and N.~Freitas,
  ``Dueling network architectures for deep reinforcement learning,'' in {\em
  International conference on machine learning}, pp.~1995--2003, 2016.

\bibitem{hessel2017rainbow}
M.~Hessel, J.~Modayil, H.~van Hasselt, T.~Schaul, G.~Ostrovski, W.~Dabney,
  D.~Horgan, B.~Piot, M.~G. Azar, and D.~Silver, ``Rainbow: Combining
  improvements in deep reinforcement learning,'' in {\em AAAI}, 2018.

\bibitem{liang2019deep}
X.~Liang, X.~Du, G.~Wang, and Z.~Han, ``A deep reinforcement learning network
  for traffic light cycle control,'' {\em IEEE Transactions on Vehicular
  Technology}, vol.~68, no.~2, pp.~1243--1253, 2019.

\bibitem{varshavskaya2009efficient}
P.~Varshavskaya, L.~P. Kaelbling, and D.~Rus, ``Efficient distributed
  reinforcement learning through agreement,'' in {\em Distributed Autonomous
  Robotic Systems 8}, pp.~367--378, Springer, 2009.

\bibitem{kar2013cal}
S.~Kar, J.~M. Moura, and H.~V. Poor, ``$\mathcal{QD}$-learning: A collaborative
  distributed strategy for multi-agent reinforcement learning through consensus
  + innovations,'' {\em IEEE Transactions on Signal Processing}, vol.~61,
  no.~7, pp.~1848--1862, 2013.

\bibitem{liu2017distributed}
W.~Liu, G.~Qin, Y.~He, and F.~Jiang, ``Distributed cooperative reinforcement
  learning-based traffic signal control that integrates v2x networks’ dynamic
  clustering,'' {\em IEEE Transactions on Vehicular Technology}, vol.~66,
  no.~10, pp.~8667--8681, 2017.

\bibitem{Bahdanau2014}
D.~Bahdanau, K.~Cho, and Y.~Bengio, ``Neural machine translation by jointly
  learning to align and translate,'' {\em arXiv preprint arXiv:1409.0473},
  2014.

\bibitem{Cho2014}
K.~Cho, B.~Van~Merri{\"e}nboer, C.~Gulcehre, D.~Bahdanau, F.~Bougares,
  H.~Schwenk, and Y.~Bengio, ``Learning phrase representations using rnn
  encoder-decoder for statistical machine translation,'' {\em arXiv preprint
  arXiv:1406.1078}, 2014.

\bibitem{Vaswani2017}
A.~Vaswani, N.~Shazeer, N.~Parmar, J.~Uszkoreit, L.~Jones, A.~N. Gomez,
  {\L}.~Kaiser, and I.~Polosukhin, ``Attention is all you need,'' in {\em
  Advances in neural information processing systems}, pp.~5998--6008, 2017.

\bibitem{kipf2016semi}
T.~N. Kipf and M.~Welling, ``Semi-supervised classification with graph
  convolutional networks,'' {\em arXiv preprint arXiv:1609.02907}, 2016.

\bibitem{velivckovic2017graph}
P.~Veli{\v{c}}kovi{\'c}, G.~Cucurull, A.~Casanova, A.~Romero, P.~Li{\`o}, and
  Y.~Bengio, ``Graph attention networks,'' in {\em International Conference on
  Learning Representations}, 2018.

\bibitem{Iqbal2018}
S.~Iqbal and F.~Sha, ``Actor-attention-critic for multi-agent reinforcement
  learning,'' {\em arXiv preprint arXiv:1810.02912}, 2018.

\bibitem{omidshafiei2017deep}
S.~Omidshafiei, J.~Pazis, C.~Amato, J.~P. How, and J.~Vian, ``Deep
  decentralized multi-task multi-agent reinforcement learning under partial
  observability,'' in {\em Proceedings of the 34th International Conference on
  Machine Learning-Volume 70}, pp.~2681--2690, JMLR. org, 2017.

\bibitem{laurent2011world}
G.~J. Laurent, L.~Matignon, L.~Fort-Piat, {\em et~al.}, ``The world of
  independent learners is not markovian,'' {\em International Journal of
  Knowledge-based and Intelligent Engineering Systems}, vol.~15, no.~1,
  pp.~55--64, 2011.

\bibitem{aragon2020traffic}
R.~Aragon-G{\'o}mez and J.~B. Clempner, ``Traffic-signal control reinforcement
  learning approach for continuous-time markov games,'' {\em Engineering
  Applications of Artificial Intelligence}, vol.~89, p.~103415, 2020.

\bibitem{nguyen2014decentralized}
D.~T. Nguyen, W.~Yeoh, H.~C. Lau, S.~Zilberstein, and C.~Zhang, ``Decentralized
  multi-agent reinforcement learning in average-reward dynamic dcops,'' in {\em
  Twenty-Eighth AAAI Conference on Artificial Intelligence}, 2014.

\bibitem{hu2003nash}
J.~Hu and M.~P. Wellman, ``Nash q-learning for general-sum stochastic games,''
  {\em Journal of machine learning research}, vol.~4, no.~Nov, pp.~1039--1069,
  2003.

\bibitem{Zhang2019CityFlow}
H.~Zhang, S.~Feng, C.~Liu, Y.~Ding, Y.~Zhu, Z.~Zhou, W.~Zhang, Y.~Yu, H.~Jin,
  and Z.~Li, ``Cityflow: A multi-agent reinforcement learning environment for
  large scale city traffic scenario,'' in {\em The World Wide Web Conference},
  pp.~3620--3624, ACM, 2019.

\bibitem{Krajzewicz2010Traffic}
D.~Krajzewicz, ``Traffic simulation with sumo--simulation of urban mobility,''
  in {\em Fundamentals of traffic simulation}, pp.~269--293, Springer, 2010.

\bibitem{Moritz2018}
P.~Moritz, R.~Nishihara, S.~Wang, A.~Tumanov, R.~Liaw, E.~Liang, M.~Elibol,
  Z.~Yang, W.~Paul, M.~I. Jordan, {\em et~al.}, ``Ray: A distributed framework
  for emerging ai applications,'' in {\em 13th USENIX Symposium on Operating
  Systems Design and Implementation (OSDI 18)}, pp.~561--577, 2018.

\bibitem{ezell2013cyber}
B.~C. Ezell, R.~M. Robinson, P.~Foytik, C.~Jordan, and D.~Flanagan, ``Cyber
  risk to transportation, industrial control systems, and traffic signal
  controllers,'' {\em Environment Systems and Decisions}, vol.~33, no.~4,
  pp.~508--516, 2013.

\bibitem{chu2016large}
T.~Chu, S.~Qu, and J.~Wang, ``Large-scale traffic grid signal control with
  regional reinforcement learning,'' in {\em 2016 American Control Conference
  (ACC)}, pp.~815--820, IEEE, 2016.

\bibitem{rabbat2004distributed}
M.~Rabbat and R.~Nowak, ``Distributed optimization in sensor networks,'' in
  {\em Proceedings of the 3rd international symposium on Information processing
  in sensor networks}, pp.~20--27, 2004.

\bibitem{zhang2018dynamic}
K.~Zhang, W.~Shi, H.~Zhu, E.~Dall’Anese, and T.~Ba{\c{s}}ar, ``Dynamic power
  distribution system management with a locally connected communication
  network,'' {\em IEEE Journal of Selected Topics in Signal Processing},
  vol.~12, no.~4, pp.~673--687, 2018.

\bibitem{corke2005networked}
P.~Corke, R.~Peterson, and D.~Rus, ``Networked robots: Flying robot navigation
  using a sensor net,'' in {\em Robotics research. The eleventh international
  symposium}, pp.~234--243, Springer, 2005.

\bibitem{Sutton1988}
R.~S. Sutton, ``Learning to predict by the methods of temporal differences,''
  {\em Machine learning}, vol.~3, no.~1, pp.~9--44, 1988.

\bibitem{Sutton2018}
R.~S. Sutton and A.~G. Barto, {\em Reinforcement learning: An introduction}.
\newblock MIT press, 2018.

\bibitem{watkins1992q}
C.~J. Watkins and P.~Dayan, ``Q-learning,'' {\em Machine learning}, vol.~8,
  no.~3-4, pp.~279--292, 1992.

\end{thebibliography}
	
	\newpage
	\section*{Appendix}
	\subsection{Fundamental DRL algorithms}
	\subsubsection{Temporal-Difference Learning}
	Temporal-Difference Learning (TD learning), proposed by Sutton\cite{Sutton1988}, combining with Dynamic Programming (DP) and Monte Carlo (MC) methods, becomes the core idea of DRL. Like the Monte Carlo algorithm, it does not need to know the specific environment model and can learn directly from experience. On the other hand, it also inherits bootstrapping from  DP algorithm, which is the unique feature of TD learning: predictions are used as targets during learning\cite{Sutton2018}. Monte Carlo simulates (or experiences) an episode until it ends, then estimates the state value based on the value of each state. In contrast, TD learning simulates an episode with one step (or several steps) per action which based on the value of the new state, and then estimate the state value before execution. 
	
	The Q-learning algorithm \cite{watkins1992q} is a groundbreaking algorithm. 
	TD learning is used here for off-policy learning.
	$$\delta_t= r_{t+1}+\gamma' \max _{a} Q\left(o_{t+1}, a_{t+1}\right)-Q\left(o_{t}, a_{t}\right)$$
	where $\delta_t$ represents TD-error.
	
	\subsubsection{Deep Q-network}
	Deep Q-network (DQN) is a powerful off-policy algorithm which has achieved excellent results in many fields since 2015\cite{mnih2015human}. It uses a neural network to approximate the Q-value function instead of tabling.
	$$
	Q\left(o_{t}, a_{t}\right) \leftarrow Q\left(o_{t}, a_{t}\right)+\alpha\left[y_j-Q\left(o_{t}, a_{t}\right)\right]
	$$
	$$
	y_{t+1}=r_{t+1}+\gamma ' \max _{a} Q\left(o_{t+1}, a_{t+1}\right)
	$$
	\subsubsection{Double Deep Q-network}
	Q-learning uses \textit{max} to select the best action, which causes a Maximization Bias problem. So \cite{VanHasselt2016} solved this by designing a Double Q-learning, it only differs in the calculation of the target Q value:
	$$
	y_{t+1}=r_{t+1}+\gamma' Q^{\prime}\left(\phi\left(o^{\prime}\right), \arg \max _{a^{\prime}} Q\left(\phi\left(o^{\prime}\right), a, w\right), w^{\prime}\right)
	$$
	\subsubsection{Dueling Deep Q-network}
	Another improvement for DQN is Dueling DQN \cite{Wang2016}, which decomposes the Q network into two separate control streams, a value function $ V(s) $, and a state-based action advantage function $ A(s, a) $. These two control flows obtain an estimate value of the Q function through a special aggregating layer.
	$$
	\begin{aligned} Q(o, a ; \theta, \alpha, \beta) &=V(o ; \theta, \beta)+\\ &\left(A(o, a ; \theta, \alpha)-\frac{1}{|\mathcal{A}|} \sum_{a^{\prime}} A\left(o, a^{\prime} ; \theta, \alpha\right)\right) \end{aligned}
	$$
	\newpage
	\subsection{Pseudocode for $ \gamma $-Reward series}
	\begin{algorithm}[h]
		\caption{\textit{$ \gamma $-Attention-Reward} Algorithm for MARL Traffic Lights Control}
		\label{alg::conjugateGradient}
		\begin{algorithmic}[1]
			\State Initialize $ E $ parallel environments with $ N $ agents
			\State Initialize replay memory  $ D $  to capacity  $ N_D $
			\State Initialize raw replay memory  $ D_r $  to capacity  $ N_D+n $
			\State Initialize action-value function  $ Q $ with random weights  $ \theta $
			\State Initialize target action-value function $  \hat{Q} $ with weights $  \theta^{-}=\theta $
			\State Initialize attention scores $ \alpha_{i,j} $
			\State $ T_{update} \leftarrow 0 $
			\For{$episode =1,  M $}
			\State Reset environments, and get initial $ o_i $ for each agent $ i $
			\For{$t=1, T $}
			\parState{%
				Select actions $ a_i\sim \pi_i(\cdot|o_i) $ for each agent $ i $ in each environment $ e $}
			\parState{%
				Send actions $ a_i $ to all parallel environments and get $ o_i' $ , $ r_i $ for all agents}
			\State Store ($ a_i, o_i,r_i,o_i'  $) in $ D_r $
			\State $ T_{update}=T_{update}+N $
			\If{$ T_{update} \leq$ min steps per update + $ n $}
			\State \textit{Replay Buffer Amendment($ D_r, \alpha_{i, j}$)}
			\parState{%
				Update Policies:\\
				Calculate $a_{1 \ldots N}^{B} \sim \pi_{i}^{\bar{\theta}}\left(o_{i}^{\prime B}\right), i \in 1 \ldots N$\\ 
				Calculate $Q_{i}^{\psi}\left(o_{1 \ldots N}^{B}, a_{1 \ldots N}^{B}\right)$ for all $i$ in parallel\\ 
				Update policies using $ \nabla_{\theta,i}\mathcal{J}(\pi_{\theta}) $ and Adam 
			}

			\State Update target $Q$ parameters: $  \hat{\theta} \leftarrow \theta $
			\State Update target attention parameters: $  \hat{\alpha}\leftarrow \alpha $
			\State $ T_{update} \leftarrow 0 $
			\EndIf
			\EndFor
			\EndFor
		\end{algorithmic}
	\end{algorithm}
	
	\begin{algorithm}[h]
		\caption{\textit{Replay Buffer Amendment}}
		\label{alg::conjugateGradient}
		\begin{algorithmic}[1]
			\Function{ReplayBufferAmendment}{$D_r, \alpha_{i,j}$}
			\For{$ i =1, N $}
			\State $ index= len(D_r)-n$
			\While{$ index> ex\_index$}
			\State $(o_i,a_i,r_i,o_i') =D_{r,i}(j)$
			\State $ R_i = \gamma $\textit{-Attention-Reward}  function($ r_i , \alpha_{i,j}$)
			
			\State Store ($ o_i,a_i,R_i,o_i'$) in $ D $
			\State    $ j=j-1 $
			\EndWhile
			\State $ ex\_index=len(D_r)-n $
			\EndFor
			
			\EndFunction
			
		\end{algorithmic}
	\end{algorithm}
\end{document}